\documentclass{article}

\PassOptionsToPackage{numbers,sort&compress}{natbib}

\usepackage[preprint]{neurips_2026}

\usepackage[utf8]{inputenc} 
\usepackage[T1]{fontenc}    
\usepackage{hyperref}       
\usepackage{url}            
\usepackage{booktabs}       
\usepackage{amsfonts}       
\usepackage{nicefrac}       
\usepackage{microtype}      
\usepackage{xcolor}         
\usepackage{amssymb}

\usepackage{amsthm} 
\usepackage{amsmath}

\AtBeginDocument{%
  \setlength{\abovedisplayskip}{0.6ex}
  \setlength{\belowdisplayskip}{0.6ex}
}
\allowdisplaybreaks[4]

\theoremstyle{definition} 
\usepackage{amsthm}

\newtheorem{proposition}{Proposition}
\newtheorem{assumption}{Assumption}
\newtheorem{corollary}{Corollary} 

\usepackage[utf8]{inputenc} 
\usepackage[T1]{fontenc}    
\usepackage{hyperref}       
\usepackage{ifsym}
\usepackage{url}            
\usepackage{multirow}
\usepackage{enumitem}
\setlist[itemize]{leftmargin=*}
\usepackage{booktabs}       
\usepackage{amsfonts}       
\usepackage{nicefrac}       
\usepackage{microtype}      
\usepackage{xcolor}         
\usepackage{graphicx}
\usepackage{subcaption}

\usepackage[ruled,vlined,linesnumbered]{algorithm2e}

\title{Multimodal Continuous Reasoning via Asymmetric Mutual Variational Learning}

\author{%
  Shijie Li$^{1,2}$\thanks{Equal contribution.} \quad
  Yilin Gao$^{2}$\footnotemark[1] \quad
  Siyuan Yang$^{2}$ \quad
  Tieyuan Chen$^{1}$ \quad
  Chaofan Gan$^{1}$ \\
  \textbf{Zhihao He}$^{1}$ \quad
  \textbf{Zicheng Zhao}$^{1}$ \quad
  \textbf{Yuyu Guo}$^{2}$\thanks{Corresponding author.} \quad
  \textbf{Weiyao Lin}$^{1}$\footnotemark[2] \quad
  \textbf{Hang Yu}$^{2}$\footnotemark[2] \\
  \\
  $^1$ Shanghai Jiao Tong University \\
  $^2$ Ant Group \\
  \texttt{\{shijieli, wylin\}@sjtu.edu.cn} \\
  \texttt{\{fhlyhv, yuyuguo1994\}@gmail.com} \\
}
\begin{document}

\maketitle

\begin{abstract}
Multimodal Large Language Models (MLLMs) are often constrained by a language-space bottleneck, forcing complex visual reasoning into discrete tokens which can lose perceptual nuance. A promising alternative is continuous latent reasoning, where the goal is to discover implicit reasoning pathways that bridge the multimodal query and the final answer. However, this introduces a severe train-inference mismatch: a training-time posterior, conditioned on the ground-truth answer, can exploit answer-dependent shortcuts. Standard variational training then forces the inference-time prior to mimic a posterior that has access to information unavailable at test time, leading to poor performance. To address this, we propose Asymmetric Mutual Variational Learning (AMVL), a framework that resolves this mismatch via a bidirectional calibration objective. A forward KL divergence trains the target-agnostic prior to match the posterior, while a novel reverse KL divergence simultaneously regularizes the posterior, preventing it from collapsing into inference-incompatible regions and mitigating this ``answer leakage''. We provide theoretical analysis formalizing this leakage as prior contamination and prove that our dual-KL objective reduces it. We instantiate AMVL in a latent-integrated MLLM and show that it consistently outperforms strong discrete and latent-reasoning baselines, improving the average score on the complex BLINK benchmark by +10.83 and achieving gains of up to +32.00 on individual reasoning tasks, with analyses confirming improved latent-space stability.
\end{abstract}

\section{Introduction}
Human reasoning is inherently multimodal. When we solve a visual puzzle or interpret a complex diagram, we think directly in perceptual, spatial, and abstract representations that language alone cannot fully capture~\cite{lecun2022path, mahowald2024dissociating}. This poses a fundamental challenge for Multimodal Large Language Models (MLLMs): if the intermediate reasoning process is constrained to the discrete token space of natural language, the model is forced to verbalize visual concepts that are intrinsically continuous and high-dimensional. The result is a systematic 
\textbf{language-space bottleneck}—reasoning quality is limited not by the model's representational capacity, but by the expressive constraints of the discrete language space through which all intermediate thought must pass. This bottleneck is particularly acute in vision-language tasks demanding fine-grained spatial abstraction or multi-step planning, where text-based Chain-of-Thought (CoT) can cause models to drift from the visual input, introduce hallucinations, and lose precise perceptual grounding~\cite{xu2026thinking, guan2024hallusionbench}.


These limitations have motivated a growing body of work on latent visual reasoning, where models perform intermediate steps directly in a continuous embedding space~\cite{lvr, pham2025multimodal, yang2025machine}. Recent methods, including LVR~\cite{lvr}, Monet~\cite{monet}, and Mull-Tokens~\cite{mull}, have shown promise by replacing discrete reasoning tokens with continuous latent states. However, these pioneering methods share a \textbf{critical and underexplored limitation}: they all rely on \textbf{explicit, hand-crafted supervision signals}—such as reconstruction objectives or alignment losses—to shape what these latent states should encode. The latent reasoning process is thus constrained to encode whatever the designer has pre-specified as important, rather than being free to discover the representations most useful for bridging the input question and the final answer.


We argue that a more principled alternative lies in formulating latent reasoning as a structured probabilistic inference problem~\cite{kingma2013auto, sohn2015learning}. Instead of prescribing what latent states should encode, this approach allows the model to discover the intermediate representations that most naturally bridge the multimodal input and the target output. This framing immediately suggests a target-aware posterior over latent states for training and a target-agnostic prior for inference. However, applying this idea to powerful autoregressive MLLMs introduces a severe train-inference mismatch~\cite{lin2025ravr,wangvariational}. When the posterior has access to the reference answer, it can rely on answer-dependent shortcuts that are unavailable at test time. Under the standard evidence lower bound (ELBO)~\cite{kingma2013auto}, a forward KL term then trains the prior to imitate these posterior latents, causing the inference-time prior to inherit a latent geometry partially shaped by information leakage. As a result, the prior is poorly calibrated for test-time reasoning even though it is optimized to approximate the posterior during training.

To address this, we propose \textbf{Asymmetric Mutual Variational Learning (AMVL)}, a unified framework for multimodal continuous reasoning. AMVL directly tackles the train-inference mismatch by establishing an asymmetric mutual learning process between the prior and the posterior. It relies on two complementary regularizers: a \textbf{forward KL alignment} term that aligns the prior with the posterior-inferred latent states, and a \textbf{reverse KL regularization} term that constrains the posterior relative to the learned prior. Crucially, these are not symmetric peer-learning losses in the style of deep mutual learning; rather, they regulate two distributions with fundamentally different conditioning structures and downstream roles. This dual-KL calibration closes the latent gap from both sides, replacing hand-crafted latent supervision with a self-contained, answer-driven signal. It encourages the latent space to be both expressive during training and well-calibrated for inference. We instantiate AMVL in a latent-integrated MLLM architecture with lightweight variational heads, making it efficient and seamlessly applicable.

In summary, our contributions are as follows:
\begin{itemize}[leftmargin=*, itemsep=0.2pt, topsep=0pt, partopsep=0pt]
    \item We identify train-inference mismatch between a target-aware posterior and a target-agnostic prior as the central obstacle in variational multimodal latent reasoning, and clarify why this problem is not adequately addressed by standard one-sided ELBO training.
    \item We propose AMVL, an asymmetric mutual learning framework that combines forward prior alignment with reverse posterior support regularization, enabling end-to-end discovery of a continuous reasoning space without external latent supervision.
    \item We demonstrate through extensive experiments that AMVL consistently outperforms strong discrete and latent-reasoning baselines on challenging multimodal benchmarks, confirming the benefits of inference-compatible continuous latent reasoning.
\end{itemize}

\section{Related Work}



\subsection{Discrete Multimodal Reasoning}

Recent advancements optimize Multimodal Large Language Models (MLLMs) to explicitly \textit{``think about images''}~\cite{hong2025apo, jiang2025vlmr3regionrecognitionreasoning, zhang2023multimodal, ni2025pointrftimprovingmultimodalreasoning}. Frameworks like Vision-R1~\cite{visionr1} and PAPO~\cite{papo} leverage reinforcement learning to elicit extensive natural language Chain-of-Thought (CoT). While interpretable, this paradigm suffers directly from the \textbf{language-space bottleneck}: forcing continuous, high-dimensional visual signals into a rigid discrete vocabulary inevitably loses fine-grained perceptual nuances, often causing reasoning drift and hallucinations.
To mitigate this, a concurrent \textit{``thinking with images''} paradigm~\cite{su2025thinking, sarch2025grounded, chen2025mint, chung2025don, zhang2025chain} intertwines pixel-level visual features directly into intermediate reasoning steps. However, while models like PixelReasoner~\cite{pixelreasoner} and DeepEyes~\cite{deepeyes} improve spatial grounding, a fundamental structural limitation remains: their overarching reasoning trajectories are still strictly bound to discrete, autoregressive text generation.

Unlike discrete paradigms, AMVL bypasses textual discretization by operating entirely in a continuous latent space. Leveraging the \textbf{inherently high information density} of continuous vectors, AMVL allows abstract spatial logic to fluidly evolve, eliminating discrete vocabulary constraints and ensuring rich perceptual details are preserved.

\subsection{Continuous Latent Reasoning}

To move beyond purely discrete reasoning trajectories, a growing body of work introduces latent tokens as internal computation slots within autoregressive models~\cite{butt2025soft, geiping2025scaling, hao2022training, shen2025codi, wang2025synadapt}. Early explorations, such as Pause Tokens~\cite{goyal2023think}, demonstrated that appending dummy tokens before generating the answer allows language models to perform implicit computation. More recently, representative continuous reasoning methods such as Mull-Tokens~\cite{mull}, LVR~\cite{lvr}, Monet~\cite{monet}, and Coconut~\cite{coconut} extend the input sequence with continuous hidden-state capacity, enabling the model to perform intermediate computation without emitting every step as discrete text. These methods suggest that continuous latent reasoning can provide additional expressiveness and computational flexibility.

While existing approaches rely on explicit supervision—reducing latent states to mere compressed proxies—we reformulate the reasoning trajectory as an unobserved stochastic variable. Through variational modeling, our continuous space is optimized solely for effective multimodal reasoning, rather than superficial signal reconstruction.

\subsection{Variational Inference for Latent Reasoning}

Variational inference offers a principled framework for conditional generation~\cite{kingma2013auto, sohn2015learning}, though applying it to autoregressive decoders historically requires mitigations against posterior collapse~\cite{iyyer2015deep, he2019lagging, higgins2017beta}. Recently, this perspective has formalized LLM reasoning trajectories as latent variables~\cite{hu2023amortizing, zhou2025variational}. However, whether compressing visual-semantic CoT (ReGuLaR~\cite{wangvariational}) or steering discrete CoT, these methods bind their optimization to the \textbf{explicit alignment of step-by-step traces}. Approaches like RAVR~\cite{lin2025ravr} use reference-guided posteriors that construct trajectories, making them highly susceptible to \textbf{answer leakage}. The posterior exploits the answer as an \textbf{informational shortcut}, forcing the target-agnostic prior to mimic this hindsight bias and causing a severe \textbf{train-inference mismatch}.

Our approach shares a conceptual resemblance with deep mutual learning~\cite{deepmutual}, as both frameworks involve two distributions mutually regularizing each other during training. However, while traditional mutual learning employs symmetric peers collaborating over the same output space, our prior and posterior are \textbf{fundamentally asymmetric}. They possess distinct conditioning structures (target-agnostic vs. target-aware) and operational phases (inference vs. training). By jointly optimizing bidirectional KL objectives, AMVL achieves an \textbf{asymmetric mutual calibration}: the forward KL aligns the prior with posterior-inferred states, while the reverse KL restricts the posterior from drifting into inference-incompatible regions, effectively mitigating answer leakage.

\begin{figure}[t]
  \vspace{-10pt}
  \centering
  \includegraphics[width=0.9\columnwidth]{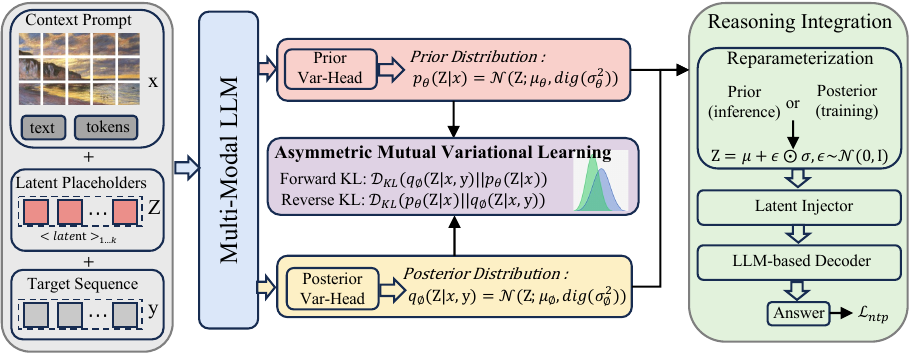}
\caption{
Overview of AMVL for multimodal continuous reasoning. Given a multimodal prompt, the model inserts $k$ latent slots into the autoregressive sequence and uses their hidden states to parameterize a target-agnostic prior $p_\theta(Z|x)$ and a target-aware posterior $q_\phi(Z|x,y)$ over latent sequences $\mathbf{Z}$. During training, posterior samples are injected into the latent slots for decoding, while forward and reverse KL terms jointly calibrate the prior and regularize the posterior. During inference, latent features are sampled from the prior and used to guide autoregressive answer generation.
}
  \label{overview}
  \vspace{-10pt}
\end{figure}

\section{Method}

We present Asymmetric Mutual Variational Learning (AMVL), a unified framework for multimodal continuous reasoning that jointly learns a continuous latent space and an autoregressive decoder. As illustrated in Figure~\ref{overview}, AMVL frames latent reasoning as a probabilistic inference problem, allowing the model to \emph{discover} intermediate representations that best connect a multimodal input to its target answer, rather than relying on hand-crafted supervision. Our core contribution is a bidirectional latent calibration objective designed to resolve the train-inference mismatch inherent in this setup.

\subsection{A Variational Approach to Latent Reasoning}

Our approach is founded on the idea that latent reasoning variables can be learned by treating them as unobserved components within a conditional generative model.

\textbf{Discovering Reasoning via Latent-Variable Modeling.}
We introduce a sequence of continuous latent variables
$\mathbf{Z} = [\mathbf{z}_1, \ldots, \mathbf{z}_k] \in \mathbb{R}^{k \times d}$
to represent intermediate reasoning steps. The conditional log-likelihood of generating an answer $\mathbf{y}$ from a multimodal context $\mathbf{x}$ is then expressed by marginalizing over these latent variables:
\begin{equation}
\log p_\theta(\mathbf{y} \mid \mathbf{x})
=
\log \int p_\theta(\mathbf{y} \mid \mathbf{x}, \mathbf{Z})\, p_\theta(\mathbf{Z} \mid \mathbf{x})\, d\mathbf{Z}.
\end{equation}
Here, $p_\theta(\mathbf{Z} \mid \mathbf{x})$ is a \textbf{target-agnostic prior} distribution over reasoning states used for inference, and $p_\theta(\mathbf{y} \mid \mathbf{x}, \mathbf{Z})$ is the decoder that generates the answer conditioned on the latent sequence. Because the integral in Eq.~1 is intractable, we turn to variational inference~\cite{kingma2013auto,sohn2015learning}. This provides a principled way to learn the latent variables by introducing a \textbf{target-aware posterior} distribution,
$q_\phi(\mathbf{Z} \mid \mathbf{x}, \mathbf{y})$.

By conditioning the posterior on both the input $\mathbf{x}$ and the ground-truth answer $\mathbf{y}$, we provide the model with a powerful, self-contained training signal. The model can effectively use ``hindsight'' to infer the latent reasoning states $\mathbf{Z}$ that would have been most useful for bridging the gap between the question and the final answer, allowing it to discover optimal reasoning pathways organically.

\textbf{The Inherent Train-Inference Mismatch.}
While powerful, this approach leads to the standard Evidence Lower Bound (ELBO):
\begin{equation}
\log p_\theta(\mathbf{y} \mid \mathbf{x})
\;\geq\;
\mathbb{E}_{q_\phi(\mathbf{Z} \mid \mathbf{x},\mathbf{y})}
\!\left[
\log p_\theta(\mathbf{y} \mid \mathbf{x}, \mathbf{Z})
\right]
-
D_{\mathrm{KL}}
\!\left(
q_\phi(\mathbf{Z} \mid \mathbf{x},\mathbf{y})
\,\|\, 
p_\theta(\mathbf{Z} \mid \mathbf{x})
\right).
\end{equation}
The first term trains the decoder to generate $\mathbf{y}$ conditioned on latent reasoning states, while the KL term encourages the prior to match the posterior. However, in our setting the posterior has access to the target sequence $\mathbf{y}$ during training, whereas the prior at inference depends only on $\mathbf{x}$. This asymmetry creates a critical failure mode we term \textbf{answer leakage}: rather than grounding the latent reasoning trace in the multimodal input, the posterior $q_\phi(\mathbf{Z} \mid \mathbf{x},\mathbf{y})$ can take a shortcut by relying almost entirely on the reference answer $\mathbf{y}$ to construct $\mathbf{Z}$~\cite{bowman2016generating, zhao2017learning}. Because the standard ELBO uses the forward KL term to pull the prior toward this posterior, the prior is trained to mimic these answer-dependent latent states. At inference time, however, the reference answer is unavailable, and the prior is left navigating a latent space whose structure was shaped by shortcuts it can no longer exploit. The result is a severe \textbf{train-inference mismatch}: the prior fails to produce latent states that support useful reasoning, despite having been trained to approximate the posterior. This mismatch is especially acute in strong autoregressive decoders, where the high model capacity makes answer leakage easy---the posterior can encode target information through subtle statistical dependencies that are nearly invisible to the reconstruction loss, yet inaccessible to the prior at test time.

\textbf{Theoretical Analysis of Answer Leakage.}
To formalize this failure mode, we decompose the posterior mean into an input-grounded component and an answer-dependent shift:
$\mu_\phi(\mathbf{x},\mathbf{y}) = \mu^{(x)}_\phi(\mathbf{x}) + \delta(\mathbf{x},\mathbf{y})$,
where $\delta(\mathbf{x},\mathbf{y})$ denotes the posterior displacement induced by the target answer $\mathbf{y}$. Under this decomposition, one can show (Proposition~\ref{prop:prior_contamination} in Appendix~\ref{appendix:prior_contamination_proof}) that minimizing the standard ELBO KL term causes the learned prior mean to absorb the average answer-dependent shift
$\bar{\delta}(\mathbf{x}) := \mathbb{E}_{\mathbf{y} \sim p(\mathbf{y}\mid \mathbf{x})}[\delta(\mathbf{x},\mathbf{y})]$
as a residual bias, producing a prior contamination of magnitude
$1/2 \sum_{j=1}^d \big[ \bar{\delta}_j(\mathbf{x})^2 / \sigma^2_{\theta,j}(\mathbf{x}) \big]$.
Furthermore, one can show (Proposition~\ref{prop:fwd_insufficient} in Appendix~\ref{appendix:fwd_same_optimum_proof}) that forward KL alignment alone---even with a stop-gradient on the posterior---yields the same contaminated prior optimum and exerts no direct corrective gradient on the posterior itself: the posterior's answer-dependent bias is unchanged. See Appendix~\ref{appendix:theory_amvl_leakage} for full statements and proofs.

\subsection{Asymmetric Mutual Variational Learning (AMVL)}

To resolve this fundamental mismatch, we introduce \textbf{Asymmetric Mutual Variational Learning (AMVL)}, a framework that establishes a mutual learning process between the prior and posterior through a bidirectional KL regularization mechanism. While standard ELBO training uses a one-sided objective that can cause the prior to inherit the posterior's answer-leakage bias, as theoretically analyzed in Appendix~\ref{appendix:theory_amvl_leakage} (Propositions~\ref{prop:prior_contamination} and~\ref{prop:fwd_insufficient}), AMVL calibrates the latent space from both directions. This dual-calibration approach is designed to make the latent space both expressive during training and well-calibrated for inference.

AMVL achieves this with two complementary losses that compose its bidirectional calibration mechanism:
\begin{itemize}[leftmargin=*, itemsep=0.2pt, topsep=0pt, partopsep=0pt]
    \item \textbf{Forward Prior Alignment (\(\mathcal{L}_{\text{fwd}}\)).} To improve the prior's calibration, we use a forward KL term (derived from the standard Evidence Lower Bound; see Appendix~B.1) to train the prior \(p_\theta\) to match the latent states inferred by the posterior \(q_\phi\). We use a stop-gradient (\(\mathrm{sg}[\cdot]\)) on the posterior to ensure this loss updates only the prior parameters:
    \begin{equation}
    \mathcal{L}_{\text{fwd}} =
    D_{\text{KL}}\big(
    \mathrm{sg}[q_\phi(\mathbf{Z}\mid \mathbf{x}, \mathbf{y})]
    \,\|\,
    p_\theta(\mathbf{Z}\mid \mathbf{x})
    \big).
    \end{equation}
    This term effectively teaches the inference-time prior to approximate the latent reasoning states found to be useful during training.

    \item \textbf{Reverse Posterior Regularization (\(\mathcal{L}_{\text{rev}}\)).} Forward alignment alone is insufficient, as it does not prevent the posterior from exploiting answer leakage in the first place. We therefore introduce a reverse KL term that updates only the posterior, regularizing it against drifting into prior-incompatible regions:
    \begin{equation}
    \mathcal{L}_{\text{rev}} =
    D_{\text{KL}}\big(
    \mathrm{sg}[p_\theta(\mathbf{Z}\mid \mathbf{x})]
    \,\|\,
    q_\phi(\mathbf{Z}\mid \mathbf{x}, \mathbf{y})
    \big).
    \end{equation}
    This term directly penalizes posterior drift away from high-density regions of the prior. Proposition~\ref{prop:reverse_restoring} formalizes this effect for the common diagonal Gaussian case.
\end{itemize}

\begin{proposition}[Reverse KL suppresses incompatible posterior drift]
\label{prop:reverse_restoring_main}
For diagonal Gaussian latents
$p_\theta(\mathbf{z} \mid \mathbf{x}) = \mathcal{N}\big(\mu_\theta(\mathbf{x}), \mathrm{diag}(\sigma^2_\theta(\mathbf{x}))\big)$
and
$q_\phi(\mathbf{z} \mid \mathbf{x}, \mathbf{y}) = \mathcal{N}\big(\mu_\phi(\mathbf{x}, \mathbf{y}), \mathrm{diag}(\sigma^2_\phi(\mathbf{x}, \mathbf{y}))\big)$,
the reverse KL in Eq.~(4) has the closed form
\[
\mathcal{L}_{\text{rev}} =
\frac{1}{2} \sum_j \left(
\frac{\sigma^2_{\theta,j}}{\sigma^2_{\phi,j}}
+
\frac{(\mu_{\theta,j} - \mu_{\phi,j})^2}{\sigma^2_{\phi,j}}
-
1
+
\log\frac{\sigma^2_{\phi,j}}{\sigma^2_{\theta,j}}
\right),
\]
where \(p_\theta\) is treated as constant by stop-gradient. Its gradient with respect to the posterior mean is
\[
\frac{\partial \mathcal{L}_{\text{rev}}}{\partial \mu_{\phi,j}}
=
\frac{\mu_{\phi,j} - \mu_{\theta,j}}{\sigma^2_{\phi,j}}.
\]
Thus, reverse KL penalizes posterior drift from prior-compatible high-density latent regions, with a mean-mismatch penalty stronger as the posterior sharpens (i.e., as $\sigma^2_{\phi,j}$ decreases).
\end{proposition}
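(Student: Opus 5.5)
The plan is a direct computation from the closed form of the KL divergence between two Gaussians, followed by a single differentiation. First, since both $p_\theta(\mathbf{z}\mid\mathbf{x})$ and $q_\phi(\mathbf{z}\mid\mathbf{x},\mathbf{y})$ have diagonal covariances, each factorizes over coordinates $j=1,\dots,d$. The KL divergence of a product measure against a product measure is the sum of coordinatewise KLs. It therefore suffices to compute $D_{\mathrm{KL}}\big(\mathcal{N}(\mu_{\theta,j},\sigma^2_{\theta,j})\,\|\,\mathcal{N}(\mu_{\phi,j},\sigma^2_{\phi,j})\big)$ for one scalar coordinate and sum. For a latent sequence $\mathbf{Z}\in\mathbb{R}^{k\times d}$, the same argument applies with the index $j$ ranging over all $k d$ entries (or over slots and then coordinates if the slots are modeled as independent diagonal Gaussians).

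For the scalar computation, write $\mathbb{E}_{p}[\log p - \log q]$ with $z\sim\mathcal{N}(\mu_\theta,\sigma_\theta^2)$. The normalizing constants give $\tfrac12\log(\sigma_\phi^2/\sigma_\theta^2)$. The quadratic term $-\tfrac{(z-\mu_\theta)^2}{2\sigma_\theta^2}$ has expectation $-\tfrac12$. For the term $\tfrac{(z-\mu_\phi)^2}{2\sigma_\phi^2}$, decompose $z-\mu_\phi=(z-\mu_\theta)+(\mu_\theta-\mu_\phi)$. The cross term vanishes in expectation, which leaves $\tfrac{\sigma_\theta^2+(\mu_\theta-\mu_\phi)^2}{2\sigma_\phi^2}$. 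Collecting terms yields exactly the displayed summand, and summing over $j$ gives the stated closed form for $\mathcal{L}_{\text{rev}}$.

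For the gradient, the stop-gradient means $\mu_{\theta,j}$ and $\sigma^2_{\theta,j}$ are constants with respect to $\phi$. The only term depending on $\mu_{\phi,j}$ is $\tfrac{(\mu_{\theta,j}-\mu_{\phi,j})^2}{2\sigma^2_{\phi,j}}$. We treat $\mu_{\phi,j}$ and $\sigma^2_{\phi,j}$ as independent outputs of the posterior head and take the partial derivative holding $\sigma^2_{\phi,j}$ fixed. This gives $(\mu_{\phi,j}-\mu_{\theta,j})/\sigma^2_{\phi,j}$, as claimed.

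The interpretive conclusion then follows by inspection. A gradient step moves $\mu_{\phi,j}$ toward $\mu_{\theta,j}$, so the answer-dependent shift $\delta$ receives a restoring force. The effective stiffness $1/\sigma^2_{\phi,j}$ grows as the posterior sharpens. This contrasts with the forward KL, whose mean gradient with respect to the prior is scaled by $1/\sigma^2_{\theta,j}$ and does not act on $\phi$.

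I expect no real obstacle. The only points needing care are stating clearly that $\sigma_\phi$ is held fixed in the partial derivative, since $\mu_\phi$ and $\sigma_\phi$ are separately parameterized, and noting that the stop-gradient makes the prior parameters constants rather than functions of $\phi$.
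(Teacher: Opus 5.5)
Your proposal is correct and follows essentially the same route as the paper: the closed form is the standard diagonal-Gaussian KL (you derive it coordinatewise, where the paper simply cites it), and the gradient comes from differentiating the quadratic mean term with the prior held constant by stop-gradient. The paper also notes that $\partial^2 \mathcal{L}_{\text{rev}}/\partial \mu_{\phi,j}^2 = 1/\sigma^2_{\phi,j} > 0$, so $\mu_{\phi,j}=\mu_{\theta,j}$ is the unique minimizer; this makes your ``restoring force by inspection'' remark precise.
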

\begin{proof}
This result is proved in Appendix~\ref{appendix:reverse_restoring_proof}.
\end{proof}
Combining $\mathcal{L}_{\text{fwd}}$ and $\mathcal{L}_{\text{rev}}$ creates bidirectional calibration. While $\mathcal{L}_{\text{fwd}}$ pulls the prior toward the posterior, $\mathcal{L}_{\text{rev}}$ prevents the posterior from drifting into inference-incompatible states. Proposition~\ref{prop:amvl_shrinkage} formalizes how this objective reduces prior contamination caused by answer leakage.

\begin{proposition}[Bidirectional calibration reduces prior contamination]
\label{prop:amvl_shrinkage_main}
Under the local linear leakage model in Assumption~\ref{assump:linear_leakage}, where the answer-dependent shift is written as
\[
\delta(\mathbf{x}, \mathbf{y}) = \alpha f(\mathbf{x}, \mathbf{y}),
\]
and under the local linear-response stationary condition in Assumption~\ref{assump:linear_response}, there exists a constant \(c > 0\) such that the local equilibrium leakage coefficient \(\alpha\) under AMVL satisfies
\[
\alpha_{\text{AMVL}}
=
\frac{\alpha_{\text{ELBO}}}{1 + c\gamma/\sigma^2_{\text{eff}}},
\]
where \(\alpha_{\text{ELBO}}\) is the equilibrium leakage under one-sided ELBO training, \(\gamma\) is the weight of the reverse KL term, and \(\sigma^2_{\text{eff}}\) is an effective posterior variance. Consequently, the prior mean contamination under AMVL satisfies
\[
\Delta_{\text{AMVL}}(\mathbf{x})
=
\|\mu_\theta^{\text{AMVL}}(\mathbf{x}) - \mu^{(x)}(\mathbf{x})\|_2
=
\alpha_{\text{AMVL}}\|\bar{f}(\mathbf{x})\|_2,
\]
whereas the corresponding contamination under one-sided ELBO matching is
\[
\Delta_{\text{ELBO}}(\mathbf{x})
=
\alpha_{\text{ELBO}}\|\bar{f}(\mathbf{x})\|_2.
\]
Therefore,
\[
\Delta_{\text{AMVL}}(\mathbf{x}) < \Delta_{\text{ELBO}}(\mathbf{x})
\]
whenever \(\gamma > 0\), \(\|\bar{f}(\mathbf{x})\|_2 > 0\), and \(\alpha_{\text{ELBO}} > 0\).
\end{proposition}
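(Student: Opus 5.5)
The plan is to work entirely inside the local model fixed by Assumptions~\ref{assump:linear_leakage} and~\ref{assump:linear_response}, treating the leakage coefficient $\alpha$ as a scalar order parameter whose equilibrium value is set by a balance of gradients.

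\textbf{Step 1: reduce contamination to $\alpha$.} Under one-sided ELBO, and by Proposition~\ref{prop:prior_contamination}, the forward KL with stop-gradient is minimized in the prior mean by averaging the posterior mean over $\mathbf{y}\sim p(\mathbf{y}\mid\mathbf{x})$. The same prior-side optimum holds under AMVL, since $\mathcal{L}_{\text{rev}}$ does not touch $\theta$ and Proposition~\ref{prop:fwd_insufficient} says the stop-gradient does not change the prior optimum. Hence in both cases
\[
\mu_\theta(\mathbf{x})=\mu^{(x)}(\mathbf{x})+\alpha\bar f(\mathbf{x}),\qquad \bar f(\mathbf{x}):=\mathbb{E}_{\mathbf{y}\mid\mathbf{x}}[f(\mathbf{x},\mathbf{y})].
\]
This immediately gives $\Delta=\alpha\|\bar f(\mathbf{x})\|_2$ for whichever equilibrium $\alpha$ obtains.

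\textbf{Step 2: write the stationarity conditions for $\alpha$.} The reconstruction term supplies a driving force. Assumption~\ref{assump:linear_response} should posit that its derivative in $\alpha$ is locally linear, of the form $-(a-b\alpha)$ with $b>0$, so that one-sided training equilibrates at $\alpha_{\text{ELBO}}=a/b$. Here I interpret the ELBO KL term's effect on $\alpha$ as already absorbed into $a,b$, or as vanishing at the prior optimum to first order.

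Under AMVL we add $\gamma\,\partial_\alpha\mathcal{L}_{\text{rev}}$. By Proposition~\ref{prop:reverse_restoring_main} and the chain rule $\partial\mu_\phi/\partial\alpha=f$,
\[
\partial_\alpha \mathcal{L}_{\text{rev}}=\sum_j \frac{(\mu_{\phi,j}-\mu_{\theta,j})f_j}{\sigma^2_{\phi,j}}.
\]
With the prior held by stop-gradient at its current value, the mean gap seen by the reverse term is of order $\alpha f_j$. Its expectation over $\mathbf{y}$ then becomes $\alpha\,\mathbb{E}\sum_j f_j^2/\sigma^2_{\phi,j}$, possibly minus a $\bar f_j^2$ correction. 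I would define $\sigma^2_{\text{eff}}$ by the identity
\[
\mathbb{E}\sum_j f_j^2/\sigma^2_{\phi,j}=\kappa/\sigma^2_{\text{eff}}
\]
and set $c:=\kappa/b>0$.

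\textbf{Step 3: solve and compare.} The stationarity condition is
\[
-(a-b\alpha)+\gamma\kappa\alpha/\sigma^2_{\text{eff}}=0,
\]
which gives
\[
\alpha_{\text{AMVL}}=\frac{a}{b+\gamma\kappa/\sigma^2_{\text{eff}}}=\frac{\alpha_{\text{ELBO}}}{1+c\gamma/\sigma^2_{\text{eff}}}.
\]
For $\gamma>0$ the denominator exceeds $1$, so if $\alpha_{\text{ELBO}}>0$ and $\|\bar f\|_2>0$ then $\Delta_{\text{AMVL}}<\Delta_{\text{ELBO}}$ strictly.

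\textbf{Main obstacle.} The delicate step is Step 2: identifying the reverse-KL restoring force as strictly linear in $\alpha$ with a positive coefficient. If the prior is taken at its contaminated optimum, the gap is $\alpha(f-\bar f)$, and the coefficient becomes the conditional variance of $f$ rather than its second moment. I will therefore lean on Assumption~\ref{assump:linear_response} to fix the precise form, and define $\sigma^2_{\text{eff}}$ and $c$ so that positivity holds whenever $f$ is not almost surely constant in $\mathbf{y}$. The alternative is to assume the reverse term uses the input-grounded reference, in which case $\kappa>0$ follows directly from $\|\bar f\|>0$.
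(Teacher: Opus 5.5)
Your proposal follows the paper's proof essentially step for step. The paper also writes the one-sided force as locally linear, $c_1(\alpha_{\mathrm{ELBO}}-\alpha)$, adds a reverse-KL restoring force $-c_2\gamma\alpha/\sigma^2_{\mathrm{eff}}$ obtained by projecting the mean gradient of Proposition~\ref{prop:reverse_restoring} onto the leakage coordinate, solves for $\alpha_{\mathrm{AMVL}}$ with $c=c_2/c_1$, and then uses the forward-KL prior optimum to get $\Delta=\alpha\|\bar f(\mathbf{x})\|_2$.

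The tension you flag is real. The paper resolves it exactly by your second option: the centering clause $\mu_\theta(\mathbf{x})\approx\mu^{(x)}(\mathbf{x})$ built into Assumption~\ref{assump:linear_response}. Commit to that route, because the conditional-variance version would need $f$ to vary with $\mathbf{y}$, a hypothesis the statement does not grant.
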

\begin{proof}
This result is proved in Appendix~\ref{appendix:amvl_shrinkage_proof}.
\end{proof}
Proposition~\ref{prop:amvl_shrinkage} formalizes how this dual objective reduces the prior contamination caused by answer leakage. The forward KL term then transfers this less-contaminated signal to the prior, resulting in a model that is better calibrated for inference. From a geometric perspective, these two KL directions establish a mutual mass-covering dynamic. As detailed in Appendix~B.3, both KL terms are mass-covering because optimization occurs on the second argument of the divergence. The forward term forces the prior to cover the posterior, while the reverse term forces the posterior to cover the prior, discouraging it from collapsing into narrow, inference-incompatible regions. This dual calibration is the core mechanism allowing AMVL to discover an expressive yet inference-compatible latent reasoning space. Further theoretical discussion, including an information-theoretic view in Proposition~\ref{prop:forward_kl_decomposition}, is available in Appendix~\ref{appendix:theory_amvl_leakage}.

\subsection{Implementation within an MLLM}

We instantiate AMVL by augmenting a standard MLLM architecture with minimal, lightweight components.

\textbf{Latent-Integrated Architecture.}
To integrate latent variables into the autoregressive process, we insert $k$ placeholder tokens, denoted as \texttt{<latent>}, into the input sequence between the prompt $\mathbf{x}$ and target $\mathbf{y}$:
\begin{equation}
S = [\mathbf{x}, \texttt{<latent>}_1, \dots, \texttt{<latent>}_k, \mathbf{y}].
\label{eq:latent_sequence}
\end{equation}
These placeholders serve a dual role: their final hidden states provide the contextual information needed to parameterize the latent distributions, and they are then replaced by sampled latent features to condition the decoder for answer generation. We model both the prior and posterior as factorized diagonal Gaussians for computational efficiency and stable, closed-form optimization:
\begin{equation}
p_\theta(\mathbf{Z}|\mathbf{x}) = \prod_{i=1}^k \mathcal{N}\left(\mathbf{z}_i; \boldsymbol{\mu}_{\theta,i}(\mathbf{x}), \mathrm{diag}(\boldsymbol{\sigma}^2_{\theta,i}(\mathbf{x}))\right),
\label{eq:prior_gaussian}
\end{equation}
\begin{equation}
q_\phi(\mathbf{Z}|\mathbf{x}, \mathbf{y}) = \prod_{i=1}^k \mathcal{N}\left(\mathbf{z}_i; \boldsymbol{\mu}_{\phi,i}(\mathbf{x}, \mathbf{y}), \mathrm{diag}(\boldsymbol{\sigma}^2_{\phi,i}(\mathbf{x}, \mathbf{y}))\right).
\label{eq:posterior_gaussian}
\end{equation}

\textbf{LLM-Native Variational Head.}
To seamlessly integrate this machinery into the MLLM, we design lightweight variational heads that operate on the model's hidden states. Let $H = [h_1, \ldots, h_k] \in \mathbb{R}^{k \times D}$ be the final-layer hidden states at the $k$ latent token positions. The Gaussian parameters are computed via a projection network:
\begin{equation}
[\boldsymbol{\mu}, \log \boldsymbol{\sigma}^2] = W_{\text{out}} \, \mathrm{SwiGLU}(\mathrm{RMSNorm}(\mathbf{H})),
\label{eq:variational_head}
\end{equation}
where $\mu, \log \sigma^2 \in \mathbb{R}^{k \times d}$. This ``LLM-native'' design leverages standard components~\cite{shazeer2020glu, zhang2019root}, preserving architectural consistency. The same head is used for both the prior and posterior, differing only in whether the MLLM's input context includes the target $\mathbf{y}$. Unless otherwise stated, we set the latent dimension to $d=512$.

\textbf{Latent-Conditioned Decoding.}
We use the standard reparameterization trick~\cite{rezende2014stochastic} to sample a latent sequence $\mathbf{Z}$:
\begin{equation}
\mathbf{Z} = \boldsymbol{\mu} + \boldsymbol{\epsilon} \odot \boldsymbol{\sigma}, \quad \boldsymbol{\epsilon} \sim \mathcal{N}(0, \mathbf{I}).
\label{eq:reparameterization}
\end{equation}
The sampled $\mathbf{Z} \in \mathbb{R}^{k \times d}$ is then projected back to the MLLM's hidden dimension $D$ via a \textbf{latent injector} and used to replace the embeddings at the \texttt{<latent>} positions, conditioning the final answer generation on the continuous reasoning states.

\subsection{Final Training Objective}

We optimize the entire model end-to-end by minimizing a composite loss function:
\begin{equation}
\mathcal{L}_\text{total} = \mathcal{L}_\text{NTP} + \beta \mathcal{L}_\text{fwd} + \gamma \mathcal{L}_\text{rev},
\label{eq:total_loss}
\end{equation}
where $\beta$ and $\gamma$ are scalar weights, and $\mathcal{L}_{\text{NTP}}$ is the standard autoregressive next-token prediction loss, conditioned on a latent sample from the posterior:
\begin{equation}
\mathcal{L}_\text{NTP} = -\mathbb{E}_{\mathbf{Z} \sim q_\phi(\mathbf{Z}|\mathbf{x},\mathbf{y})} \left[\sum_t \log p_\text{LLM}(y_t | \mathbf{x}, y_{<t}, \mathbf{Z})\right].
\label{eq:ntp_loss}
\end{equation}

\textbf{Closed-Form KL Computation.}
A key advantage of our diagonal Gaussian parameterization is that both KL terms admit a closed-form solution. For two diagonal Gaussians $\mathcal{N}_1 = \mathcal{N}(\mu_1, \mathrm{diag}(\sigma_1^2))$ and $\mathcal{N}_2 = \mathcal{N}(\mu_2, \mathrm{diag}(\sigma_2^2))$, the KL divergence is:
\begin{equation}
D_{KL}(\mathcal{N}_1 \| \mathcal{N}_2) = \frac{1}{2} \sum_{j=1}^d \left( \frac{\sigma^2_{1,j} + (\mu_{1,j} - \mu_{2,j})^2}{\sigma^2_{2,j}} - 1 + \log \frac{\sigma^2_{2,j}}{\sigma^2_{1,j}} \right).
\label{eq:gaussian_kl}
\end{equation}
This allows for efficient and stable computation of both $\mathcal{L}_\text{fwd}$ and $\mathcal{L}_\text{rev}$ during training.

\textbf{Asymmetric KL Scheduling.}
Rather than using fixed weights, we schedule $\beta$ and $\gamma$ to stabilize training, as detailed in Appendix~\ref{appendix:experimental_details}. The forward KL weight $\beta$ is warmed up early to allow the prior to begin tracking the posterior. The reverse KL weight $\gamma$ is introduced with a delay and kept weaker. This strategy allows the prior to become partially calibrated before the reverse regularization constrains the posterior, preventing over-regularization in the early stages when the prior is still weak.

\textbf{Inference.}
At inference time, the target $\mathbf{y}$ is unavailable. We therefore sample latent variables from the learned prior $\mathbf{Z} \sim p_\theta(\mathbf{Z}|\mathbf{x})$, inject them into the decoder, and generate the answer autoregressively. The bi-bounded training ensures this prior is well-calibrated to produce latent states that effectively guide the model toward the correct answer.


\section{Experiments}
\subsection{Experimental Setup}
\textbf{Implementation details.}
We implement AMVL on top of Qwen2.5-VL-7B-Instruct~\cite{qwen25}. Training uses a mixture of multimodal reasoning datasets, including Visual-CoT~\cite{Visualcot}, ReFocus~\cite{ReFocus}, CogCoM~\cite{CogCoM}, and Zebra-CoT~\cite{Zebra-CoT}. Unless otherwise specified, we use $k=8$ latent tokens and latent dimension $d=512$. Following~\cite{lvr, monet}, the vision encoder is frozen during training, while the language backbone and variational modules are jointly optimized. 
Full details on preprocessing, latent token construction, KL scheduling, optimization, and infrastructure are provided in Appendix~\ref{appendix:experimental_details}.

\textbf{Experiment Benchmarks.}
To rigorously evaluate our method, we structure our experiments across three distinct axes: fine-grained visual perception, complex visual reasoning, and out-of-distribution (OOD) robustness. First, to assess \textbf{high-resolution processing and dense feature extraction}, we evaluate on V$^*$~\cite{vstar}, HRBench4K~\cite{hrbench}, and HRBench8K~\cite{hrbench}. Second, we investigate \textbf{core visual cognition and spatial logic} using the diverse perception-heavy tasks of the BLINK~\cite{blink} benchmark. Finally, to examine robustness beyond in-distribution settings, we assess OOD generalization on the abstract reasoning categories of the VisualPuzzles benchmark (results in Appendix~\ref{appendix:ood_visualpuzzles}).

\textbf{Baselines.} To ensure a fair comparison, we select representative state-of-the-art MLLM baselines built upon the same foundational model, Qwen2.5-VL-7B. We group these baselines into two paradigms: (1) \textbf{Discrete Reasoning}, which includes text-centric approaches (\emph{thinking about images}, e.g., Vision-R1~\cite{visionr1}, PAPO~\cite{papo}) and visually-augmented discrete generation (\emph{thinking with images}, e.g., PixelReasoner~\cite{pixelreasoner}, DeepEyes~\cite{deepeyes}); and (2) \textbf{Continuous Latent Reasoning}, which includes LVR~\cite{lvr}, Mull-Tokens~\cite{mull}, and Monet~\cite{monet}. Detailed descriptions are provided in Appendix~\ref{app:baseline}.

\begin{table}[t]
\caption{Performance on V$^*$, HRBench4K, and HRBench8K. ``Avg.'' is the mean Overall score across benchmarks. Best and second-best open-source results are highlighted in \textbf{bold} and \underline{underlined}. The last row shows absolute gains over Qwen2.5-VL-7B.}
\label{tab:main_results}
\centering
\resizebox{0.99\textwidth}{!}{
\begin{tabular}{lcccccccccc}
\toprule
\multirow{2}{*}{\textbf{Method}} &  \multicolumn{3}{c}{\textbf{V$^*$}} & \multicolumn{3}{c}{\textbf{HRBench4K}} & \multicolumn{3}{c}{\textbf{HRBench8K}} & \multirow{2}{*}{\textbf{Avg.}}\\
\cmidrule(lr){2-4} \cmidrule(lr){5-7} \cmidrule(lr){8-10}
& Overall & Attribute & Spatial & Overall & FSP & FCP & Overall & FSP & FCP \\
\midrule
\multicolumn{11}{c}{\textit{Proprietary Models}} \\
\midrule
GPT-4o        & 67.50 & 72.20 & 60.50 & 59.00 & 70.00 & 48.00 & 55.50 & 62.00 & 49.00 & 60.67 \\
\midrule
\multicolumn{11}{c}{\textit{Open-Source Models}} \\
\midrule
Qwen2.5-VL-7B & 76.44 & 77.39 & 75.00 & 68.00 & 80.25 & 55.75 & 63.75 & 74.25 & 53.25 & 69.40 \\
\quad+ SFT    & 81.68 & \underline{83.48} & 78.95 & 68.38 & 78.28 & 58.50 & 61.63 & 70.75 & 52.50 & 70.56 \\
\quad+ SFT + GRPO & 78.53 & 78.26 & 78.95 & 70.00 & 83.25 & 56.75 & 66.75 & 78.00 & 55.50 & 71.76 \\
DeepEyes      & \underline{83.25} & \textbf{84.35} & 81.58 & 71.25 & 83.75 & \underline{58.75} & 65.13 & 77.00 & 53.25 & 73.21 \\
LVR           & 81.15 & 80.00 & \underline{82.89} & 70.62 & 83.50 & 57.75 & 64.12 & 77.25 & 51.00 & 71.96 \\
PixelReasoner & 81.15 & 80.87 & 81.58 & \underline{72.00} & 84.00 & \textbf{60.00} & 66.12 & 76.50 & \underline{55.75} & 73.09 \\
Mull-Tokens          & 79.06 & 81.58 & 77.49 & 70.25 & \underline{86.50} & 54.00 & 65.75 & \underline{81.00} & 50.50 & 71.69 \\
Monet         & \underline{83.25} & \underline{83.48} & \underline{82.89} & 71.00 & 85.25 & 56.75 & \underline{68.00} & 79.75 & \textbf{56.25} & \underline{74.08} \\
\midrule
Ours-7B       & \textbf{84.29} & \textbf{84.35} & \textbf{84.21} & \textbf{72.12} & \textbf{87.25} & 57.00 & \textbf{68.50} & \textbf{83.50} & 53.50 & \textbf{74.97} \\
Absolute gain & {\color{green!60!black}+7.85} & {\color{green!60!black}+6.96} & {\color{green!60!black}+9.21} & {\color{green!60!black}+4.12} & {\color{green!60!black}+7.00} & {\color{green!60!black}+1.25} & {\color{green!60!black}+4.75} & {\color{green!60!black}+9.25} & {\color{green!60!black}+0.25} & {\color{green!60!black}+5.57} \\
\bottomrule
\end{tabular}
}
\vspace{-7pt}
\end{table}

\begin{table}[t]
\caption{Performance on vision-centric BLINK tasks. 
Obj. Loc., M-View, and Vis. Sim. denote Object Localization, Multi-view Reasoning, and Visual Similarity.
}
\label{tab:blink_results}
\centering
\resizebox{0.99\textwidth}{!}{
\begin{tabular}{lcccccccc}
\toprule
Method & IQ Test & Jigsaw & Obj. Loc. & Art Style & M-View & Spatial Relation & Vis. Sim. & Avg. \\
\midrule
Qwen2.5-VL-7B & 20.00 & 45.33 & 49.18 & 64.96 & 41.35 & \underline{88.81} & 82.96 & 56.08 \\
Vision-R1     & 27.33 & 54.67 & 48.36 & 65.81 & 45.86 & 79.72 & 71.85 & 56.23 \\
PixelReasoner & 26.00 & \underline{72.00} & \underline{54.10} & 67.52 & 46.62 & 86.71 & \underline{85.93} & \underline{62.70} \\
PAPO          & 22.67 & 66.67 & \textbf{56.56} & 64.96 & 49.62 & \textbf{90.21} & 85.19 & 62.27 \\
LVR           & 26.00 & 52.67 & 50.00 & \textbf{70.94} & 45.41 & \underline{88.81} & 82.22 & 59.44 \\
Mull-Tokens    & \underline{31.33} & 69.33 & 47.54 & 62.42 & \textbf{60.15} & 84.61 & 80.74 & 62.30 \\
Monet         & 29.33 & 42.67 & 52.46 & 62.39 & 50.38 & 79.02 & 85.19 & 57.35 \\
\midrule
Ours-7B       & \textbf{32.67} & \textbf{77.33} & \underline{54.92} & \underline{70.09} & \underline{55.64} & \underline{88.81} & \textbf{88.89} & \textbf{66.91} \\
Absolute Gain & {\color{green!60!black}+12.67} & {\color{green!60!black}+32.00} & {\color{green!60!black}+5.74} & {\color{green!60!black}+5.13} & {\color{green!60!black}+14.29} & {\color{gray}+0.00} & {\color{green!60!black}+5.93} & {\color{green!60!black}+10.83} \\
\bottomrule
\end{tabular}
}
\vspace{-7pt}
\end{table}

\subsection{Main Results}

\textbf{Overall Performance Analysis.}
AMVL establishes a new state-of-the-art among Qwen2.5-VL-7B based models. On fine-grained perception benchmarks (Table~\ref{tab:main_results}), it achieves a 74.97 average (+5.57 absolute gain), driven by robust dense feature extraction on V$^*$ (+7.85) and HRBench8K (+4.75). Furthermore, AMVL demonstrates exceptional proficiency in complex spatial reasoning (Table~\ref{tab:blink_results}), elevating the BLINK average by +10.83. This is highlighted by a remarkable +32.00 surge on the topology-heavy Jigsaw task. Ultimately, these gains confirm that bidirectionally regularized latent reasoning fundamentally enhances both microscopic visual search and macroscopic structural logic. This advantage further extends to out-of-distribution settings, where AMVL consistently outperforms competing methods on shifted visual puzzle variants (Appendix~\ref{appendix:ood_visualpuzzles}), indicating that the learned latent reasoning space remains calibrated and reliable under distribution shift.

\textbf{Compared with Discrete Multimodal Reasoning.} 
To validate our hypothesis regarding the \emph{language-space bottleneck}, we compare AMVL against recent state-of-the-art discrete reasoning models, including methods that rely on text-based reasoning chains (\emph{thinking about images}, e.g., Vision-R1, PAPO) and those that interleave visual tools within discrete generation (\emph{thinking with images}, e.g., PixelReasoner, DeepEyes). While these baselines improve upon the base model, they force high-dimensional visual concepts into a discrete, lossy text space.
As shown in Tables~\ref{tab:main_results} and \ref{tab:blink_results}, AMVL consistently outperforms these discrete paradigms. On abstract tasks like IQ Test and Visual Similarity in BLINK, where discrete verbalization often falls short or induces hallucinations, AMVL achieves superior accuracy (32.67 and 88.89, respectively). This empirically validates our claim in the Introduction: performing intermediate reasoning directly in continuous perceptual space preserves essential spatial and abstract representations that discrete language tokens cannot fully capture.

\textbf{Compared with Continuous Latent Reasoning.} 
More importantly, we benchmark AMVL against pioneering \emph{latent reasoning} frameworks, including LVR, Mull-Tokens, and Monet. While these methods bypass the language bottleneck, they rely on \emph{explicit, hand-crafted supervision signals} (such as predefined reconstruction targets), which restricts the model's ability to discover optimal intermediate representations. 
AMVL's systematic superiority over LVR, Mull-Tokens, and Monet across virtually all sub-metrics provides strong empirical evidence for our variational formulation. By framing continuous reasoning as a structured probabilistic inference problem, AMVL eliminates the need for latent supervision. The superior performance—particularly the consistent gains across in-distribution benchmarks—demonstrates that our dual-KL regularization mitigates posterior collapse and improves the quality of the learned latent reasoning space. It ensures that the continuous reasoning representations are not only expressive during training but also stable and effective during inference.

\subsection{Ablation Study}


\begin{table}[t]
\caption{Ablation of training objectives. ``Fwd-KL'' and ``Rev-KL'' denote the forward and reverse KL terms. The last two rows differ only in optimization order.}
\label{tab:loss_ablation}
\centering
\begin{tabular}{lccc}
\toprule
Method & V$^*$ & HRBench4K & HRBench8K \\
\midrule
NTP only & 81.15 & 70.50 & 67.38 \\
Fwd-KL only & 40.84 & 53.37 & 52.00 \\
Rev-KL only & 75.92 & 69.62 & 64.38 \\
NTP + Fwd-KL & 82.72 & 72.12 & 67.75 \\
NTP + Rev-KL & 82.20 & 71.75 & 67.25 \\
NTP + Fwd-KL + Rev-KL (reverse-first) & 80.63 & 72.38 & 68.25 \\
NTP + Fwd-KL + Rev-KL (forward-first) & 84.29 & 72.12 & 68.50 \\
\bottomrule
\end{tabular}
\vspace{-7pt}
\end{table}



\textbf{Effect of bidirectional variational calibration.}
Table~\ref{tab:loss_ablation} shows the next-token prediction (NTP) baseline suffers from a train-inference mismatch. \textbf{As measured in our latent-spread analysis (Appendix~\ref{appendix:latent_spread})}, either calibration direction partially alleviates this issue: forward alignment improves the inference-time prior by training it to track posterior latents, while reverse support regularization constrains posterior drift toward regions unsupported by the prior. However, optimizing either direction alone is suboptimal. Forward-only training still leaves the posterior unconstrained and target-dependent, whereas reverse-only training can \textbf{overly suppress} useful latent variability despite achieving tight geometric compatibility. The full AMVL objective performs best, confirming that effective multimodal latent reasoning requires both prior alignment and posterior regularization.

\textbf{Effect of latent configuration.}
Table~\ref{tab:latent_config_ablation} studies varying latent token \textbf{count} $k$ and dimension $d$. Increasing $k$ from 4 to 8 improves performance, providing the minimum capacity necessary for complex reasoning. However, further increasing $k$ to 16 degrades results. Unlike discrete text tokens---which suffer from a low information-per-token ratio---continuous vectors possess \textbf{inherently high information density}, encapsulating rich logic compactly. Consequently, excessive latent slots introduce redundancy and ``information dilution,'' complicating variational optimization. A similar trend holds for $d$: increasing $d$ from 512 to 768 degrades \textbf{performance}, as over-parameterization exacerbates dual-KL optimization burden without adding reasoning power. Given the efficiency of high-density latent representations, we adopt $k=8$ and $d=512$ as the default AMVL configuration.

\textbf{Additional ablations.}
Appendix~\ref{app:more_ablation} provides additional ablations on variational head architectures, loss weights, stop-gradient design, and inference-time latent sampling. Results show that AMVL benefits from a lightweight native variational head, remains stable across loss weights, relies on decoupled gradients for effective prior-posterior alignment, and is robust to inference-time perturbations.

\begin{table}[t]
\caption{Ablation study of latent configuration. We vary the number of latent tokens $k$ and the latent dimension $d$ while keeping other settings fixed.}
\label{tab:latent_config_ablation}
\centering
\begin{tabular}{lccccccc}
\toprule
\multirow{2}{*}{Setting} & \multicolumn{3}{c}{{Number of latent tokens $k$}} & \multicolumn{4}{c}{{Latent dimension $d$}} \\
\cmidrule(lr){2-4} \cmidrule(lr){5-8}
 & 4 tokens & 8 tokens & 16 tokens & 128 & 256 & 512 & 768 \\
\midrule
HRBench4K & 70.00 & 72.12 & 71.75 & 72.62 & 71.50 & 72.12 & 70.88 \\
HRBench8K & 66.25 & 68.50 & 67.12 & 66.75 & 66.62 & 68.50 & 67.38 \\
V$^*$ & 76.44 & 84.29 & 81.15 & 82.72 & 76.96 & 84.29 & 79.58 \\
\bottomrule
\end{tabular}
\vspace{-7pt}
\end{table}

\section{Conclusion}

In this paper, we address the language-space bottleneck and train–inference mismatch in multimodal large language models. We propose \textbf{Asymmetric Mutual Variational Learning (AMVL)}, a principled framework for continuous latent reasoning that jointly aligns the prior and regularizes the posterior. AMVL decouples latent expressiveness from answer leakage without requiring hand-crafted supervision. Experiments on fine-grained perception and abstract reasoning benchmarks show that AMVL achieves state-of-the-art performance over both discrete Chain-of-Thought and prior latent-reasoning methods.


\bibliographystyle{unsrt}
\bibliography{reference}

\newpage
\appendix
\counterwithin{proposition}{section}

\section{Main Notation Introduction}

For clarity, Table~\ref{tab:main_notation} summarizes the main notations used throughout the paper.

\begin{table*}[htbp]
\centering
\caption{Meanings of the main notations used in the paper.}
\label{tab:main_notation}
\renewcommand{\arraystretch}{1.15}
\setlength{\tabcolsep}{8pt}
\resizebox{0.99\textwidth}{!}{
\begin{tabular}{l l p{9.5cm}}
\toprule
\textbf{Notation} & \textbf{Type} & \textbf{Meaning} \\
\midrule
$\mathbf{x}$ & sequence / context & Multimodal input context, e.g., visual tokens and text prompt. \\
$\mathbf{y}$ & sequence & Target output sequence (answer tokens). \\
$y_t$ & token & The $t$-th token in the target sequence. \\
$\mathbf{y}_{<t}$ & sequence prefix & Target prefix before time step $t$. \\
$\mathbf{Z}=[\mathbf{z}_1,\dots,\mathbf{z}_k]$ & $\mathbb{R}^{k\times d}$ & Continuous latent reasoning sequence consisting of $k$ latent slots. \\
$\mathbf{z}_i$ & $\mathbb{R}^{d}$ & The continuous latent variable at the $i$-th latent slot. \\
$k$ & integer & Number of latent slots. \\
$d$ & integer & Latent dimension of each latent slot. \\
$D$ & integer & Hidden dimension of the base MLLM. \\
$\mathbf{H}=[\mathbf{h}_1,\dots,\mathbf{h}_k]$ & $\mathbb{R}^{k\times D}$ & Final-layer hidden states at the latent placeholder positions. \\
$\mathbf{h}_i$ & $\mathbb{R}^{D}$ & Hidden state at the $i$-th latent token position. \\
$p_\theta(\mathbf{Z}|\mathbf{x})$ & distribution & Target-agnostic prior distribution over latent reasoning states, conditioned solely on input $\mathbf{x}$. \\
$q_\phi(\mathbf{Z}|\mathbf{x},\mathbf{y})$ & distribution & Target-aware variational posterior distribution over latent reasoning states, conditioned on $(\mathbf{x},\mathbf{y})$. \\
$\theta$ & parameters & Parameters of the prior branch and the autoregressive decoder. \\
$\phi$ & parameters & Parameters of the variational posterior branch. \\
$\boldsymbol{\mu}_{\theta,i}(\mathbf{x})$ & $\mathbb{R}^{d}$ & Mean vector of the prior distribution for the $i$-th latent slot. \\
$\boldsymbol{\sigma}_{\theta,i}^2(\mathbf{x})$ & $\mathbb{R}^{d}$ & Diagonal variance vector of the prior distribution for the $i$-th latent slot. \\
$\boldsymbol{\mu}_{\phi,i}(\mathbf{x},\mathbf{y})$ & $\mathbb{R}^{d}$ & Mean vector of the posterior distribution for the $i$-th latent slot. \\
$\boldsymbol{\sigma}_{\phi,i}^2(\mathbf{x},\mathbf{y})$ & $\mathbb{R}^{d}$ & Diagonal variance vector of the posterior distribution for the $i$-th latent slot. \\
$\boldsymbol{\mu}$ & $\mathbb{R}^{k\times d}$ & Mean tensor of a given latent Gaussian distribution. \\
$\boldsymbol{\sigma}^2$ & $\mathbb{R}^{k\times d}$ & Diagonal variance tensor of a given latent Gaussian distribution. \\
$\log \boldsymbol{\sigma}^2$ & $\mathbb{R}^{k\times d}$ & Log-variance tensor predicted by the variational heads. \\
$\boldsymbol{\epsilon}$ & $\mathbb{R}^{k\times d}$ & Standard Gaussian noise used in the reparameterization trick, sampled from $\mathcal{N}(0,\mathbf{I})$. \\
$\mathbf{H}_Z$ & $\mathbb{R}^{k\times D}$ & Decoder-aligned latent features obtained by projecting $\mathbf{Z}$ through the latent injector. \\
$\mathrm{Injector}(\cdot)$ & mapping & Linear or MLP projection from latent space $\mathbb{R}^{d}$ to MLLM hidden space $\mathbb{R}^{D}$. \\
$\mathrm{sg}[\cdot]$ & operator & Stop-gradient operator, used to block gradient flow through a specific branch. \\
$\mathcal{L}_{\mathrm{fwd}}$ & scalar loss & Forward KL alignment loss, $D_{\mathrm{KL}}(\mathrm{sg}[q_\phi(\mathbf{Z}|\mathbf{x},\mathbf{y})]\|p_\theta(\mathbf{Z}|\mathbf{x}))$. \\
$\mathcal{L}_{\mathrm{rev}}$ & scalar loss & Reverse KL regularization loss, $D_{\mathrm{KL}}(\mathrm{sg}[p_\theta(\mathbf{Z}|\mathbf{x})]\|q_\phi(\mathbf{Z}|\mathbf{x},\mathbf{y}))$. \\
$\mathcal{L}_{\mathrm{NTP}}$ & scalar loss & Autoregressive next-token prediction loss conditioned on latent reasoning states. \\
$\mathcal{L}_{\mathrm{total}}$ & scalar loss & Final training objective combining $\mathcal{L}_{\mathrm{NTP}}$, $\mathcal{L}_{\mathrm{fwd}}$, and $\mathcal{L}_{\mathrm{rev}}$. \\
$\beta$ & scalar & Scheduling weight for the forward KL loss term. \\
$\gamma$ & scalar & Scheduling weight for the reverse KL loss term. \\
$D_{\mathrm{KL}}(\cdot\|\cdot)$ & divergence & Kullback--Leibler divergence between two distributions. \\
\bottomrule
\end{tabular}
}
\end{table*}

\section{Variational Derivations and Theoretical Motivation}
\label{appendix:variational_derivation}

\subsection{Derivation of the ELBO}
\label{appendix:elbo_derivation}

We begin with the conditional log-marginal likelihood of the target sequence $\mathbf{y}$ given the multimodal context $\mathbf{x}$:
\begin{equation}
\log p_\theta(\mathbf{y}|\mathbf{x})
=
\log \int p_\theta(\mathbf{y},\mathbf{Z}|\mathbf{x})\, d\mathbf{Z},
\end{equation}
where $\mathbf{Z}$ denotes the latent reasoning sequence. This quantity is the ideal training objective, since it marginalizes over all possible latent reasoning trajectories that may support the generation of the target answer. However, the integral is generally intractable because the latent space is continuous and high-dimensional.

To obtain a tractable objective, we introduce a variational posterior $q_\phi(\mathbf{Z}|\mathbf{x},\mathbf{y})$, which approximates the target-conditioned latent distribution during training. Multiplying and dividing by $q_\phi(\mathbf{Z}|\mathbf{x},\mathbf{y})$ inside the integral gives
\begin{equation}
\log p_\theta(\mathbf{y}|\mathbf{x})
=
\log \int q_\phi(\mathbf{Z}|\mathbf{x},\mathbf{y})
\frac{p_\theta(\mathbf{y},\mathbf{Z}|\mathbf{x})}{q_\phi(\mathbf{Z}|\mathbf{x},\mathbf{y})}\, d\mathbf{Z}.
\end{equation}
This reformulation allows us to express the marginal likelihood as an expectation with respect to the variational posterior.

Applying Jensen's inequality yields a lower bound on the log-marginal likelihood:
\begin{equation}
\log p_\theta(\mathbf{y}|\mathbf{x})
\ge
\mathbb{E}_{q_\phi(\mathbf{Z}|\mathbf{x},\mathbf{y})}
\left[
\log \frac{p_\theta(\mathbf{y},\mathbf{Z}|\mathbf{x})}{q_\phi(\mathbf{Z}|\mathbf{x},\mathbf{y})}
\right].
\end{equation}
Using the factorization
\begin{equation}
p_\theta(\mathbf{y},\mathbf{Z}|\mathbf{x})=p_\theta(\mathbf{y}|\mathbf{x},\mathbf{Z})\,p_\theta(\mathbf{Z}|\mathbf{x}),
\end{equation}
we obtain
\begin{align}
\log p_\theta(\mathbf{y}|\mathbf{x})
&\ge
\mathbb{E}_{q_\phi(\mathbf{Z}|\mathbf{x},\mathbf{y})}[\log p_\theta(\mathbf{y}|\mathbf{x},\mathbf{Z})]
-
D_{\mathrm{KL}}\!\left(q_\phi(\mathbf{Z}|\mathbf{x},\mathbf{y})\,\|\,p_\theta(\mathbf{Z}|\mathbf{x})\right).
\end{align}
This is the standard evidence lower bound (ELBO) used in our formulation~\cite{kingma2013auto}.

The ELBO contains two terms with distinct roles. The reconstruction term,
\[
\mathbb{E}_{q_\phi(\mathbf{Z}|\mathbf{x},\mathbf{y})}[\log p_\theta(\mathbf{y}|\mathbf{x},\mathbf{Z})],
\]
encourages the decoder to generate the target answer conditioned on latent reasoning states sampled from the posterior. The KL term,
\[
D_{\mathrm{KL}}\!\left(q_\phi(\mathbf{Z}|\mathbf{x},\mathbf{y})\,\|\,p_\theta(\mathbf{Z}|\mathbf{x})\right),
\]
encourages the target-agnostic prior to match the target-aware posterior. Together, these two terms define the standard variational objective for latent-variable conditional generation.

In our model, the conditional likelihood is parameterized autoregressively:
\begin{equation}
\log p_\theta(\mathbf{y}|\mathbf{x},\mathbf{Z})
=
\sum_t \log p_\theta(y_t \mid \mathbf{x}, \mathbf{y}_{<t}, \mathbf{Z}).
\end{equation}
Accordingly, the reconstruction term in the ELBO is instantiated as the standard next-token prediction objective conditioned on the sampled latent sequence $\mathbf{Z}$. In other words, the decoder learns to generate the answer token by token while treating the latent reasoning sequence as an additional continuous conditioning signal~\cite{bowman2016generating}.

This standard ELBO serves as the variational foundation of our method. However, as discussed in the main text, ELBO optimization alone is insufficient in our setting because the posterior has access to the target during training, whereas inference must rely solely on the prior~\cite{razavi2019preventing, alias2017z}. This motivates the additional reverse-KL regularization introduced in AMVL.

\subsection{Evidence Upper Bound (EUBO) and Reverse KL Regularization}
\label{appendix:eubo}

As discussed above, standard ELBO optimization alone is insufficient in our setting, because it only encourages the prior to match the target-aware posterior, while leaving the posterior itself weakly constrained with respect to inference-time usability~\cite{bowman2016generating, zhao2017towards, he2019lagging}. This motivates introducing an additional reverse-side regularizer on the posterior branch. One theoretical motivation for such a reverse-KL term comes from the Evidence Upper Bound (EUBO)~\cite{hinton1995wake, grosse2015sandwiching} view of variational inference.

While the standard ELBO relies on the forward KL divergence and provides a lower bound on the marginal likelihood, the reverse KL divergence with respect to the true target-conditioned posterior gives rise to an EUBO.
Consider the reverse KL divergence from the true target-conditioned posterior $p_\theta(\mathbf{Z}|\mathbf{x},\mathbf{y})$ to the variational posterior $q_\phi(\mathbf{Z}|\mathbf{x},\mathbf{y})$. By non-negativity of KL divergence,
\begin{equation}
D_{\mathrm{KL}}\!\left(p_\theta(\mathbf{Z}|\mathbf{x},\mathbf{y})\,\|\,q_\phi(\mathbf{Z}|\mathbf{x},\mathbf{y})\right) \ge 0.
\end{equation}
By the definition of conditional probability, we have
\[
p_\theta(\mathbf{Z}|\mathbf{x},\mathbf{y})=\frac{p_\theta(\mathbf{y},\mathbf{Z}|\mathbf{x})}{p_\theta(\mathbf{y}|\mathbf{x})},
\]
we can expand:
\begin{align}
D_{\mathrm{KL}}\!\left(p_\theta(\mathbf{Z}|\mathbf{x},\mathbf{y})\,\|\,q_\phi(\mathbf{Z}|\mathbf{x},\mathbf{y})\right)
&=
\mathbb{E}_{p_\theta(\mathbf{Z}|\mathbf{x},\mathbf{y})}
\left[
\log \frac{p_\theta(\mathbf{Z}|\mathbf{x},\mathbf{y})}{q_\phi(\mathbf{Z}|\mathbf{x},\mathbf{y})}
\right] \nonumber \\
&=
\mathbb{E}_{p_\theta(\mathbf{Z}|\mathbf{x},\mathbf{y})}
\left[
\log \frac{p_\theta(\mathbf{y},\mathbf{Z}|\mathbf{x})}{p_\theta(\mathbf{y}|\mathbf{x})\,q_\phi(\mathbf{Z}|\mathbf{x},\mathbf{y})}
\right] \nonumber \\
&=
\mathbb{E}_{p_\theta(\mathbf{Z}|\mathbf{x},\mathbf{y})}
\left[
\log \frac{p_\theta(\mathbf{y},\mathbf{Z}|\mathbf{x})}{q_\phi(\mathbf{Z}|\mathbf{x},\mathbf{y})}
\right]
-
\log p_\theta(\mathbf{y}|\mathbf{x}).
\end{align}
Rearranging the terms, we obtain an upper bound on the log-marginal likelihood:
\begin{equation}
\log p_\theta(\mathbf{y}|\mathbf{x})
\le
\mathbb{E}_{p_\theta(\mathbf{Z}|\mathbf{x},\mathbf{y})}
\left[
\log \frac{p_\theta(\mathbf{y},\mathbf{Z}|\mathbf{x})}{q_\phi(\mathbf{Z}|\mathbf{x},\mathbf{y})}
\right]
\triangleq
\mathcal{U}_{\mathrm{EUBO}}.
\end{equation}
This establishes the exact EUBO. Minimizing $\mathcal{U}_{\mathrm{EUBO}}$ with respect to the variational parameters $\phi$ is mathematically equivalent to minimizing the exact reverse KL divergence. Unlike the forward KL, this reverse direction penalizes approximate distributions that assign insufficient density to regions supported by the reference distribution. This support-coverage preference~\cite{minka2005divergence, bishop2006pattern} is one reason why reverse-KL-style objectives are often invoked when motivating upper-bound-oriented regularization principles.

\paragraph{From Exact EUBO to Practical Surrogate.}
Directly minimizing the exact reverse KL requires taking expectations under the true posterior $p_\theta(\mathbf{Z}|\mathbf{x},\mathbf{y})$, which is analytically intractable and unavailable during end-to-end optimization. 

In our setting, an ideal reference distribution for posterior regularization should reflect inference-time latent support while remaining tractable. Since the true target-conditioned posterior is not available in closed form, we instead use the learned prior $p_\theta(\mathbf{Z}|\mathbf{x})$ as a tractable reference distribution and define the practical reverse regularizer as:
\begin{equation}
\mathcal{L}_{\mathrm{rev}}
=
D_{\mathrm{KL}}\!\left(\mathrm{sg}[p_\theta(\mathbf{Z}|\mathbf{x})]\,\|\,q_\phi(\mathbf{Z}|\mathbf{x},\mathbf{y})\right),
\end{equation}
where $\mathrm{sg}[\cdot]$ denotes the stop-gradient operator. This design yields a practical reverse-KL regularizer that faithfully inherits the support-compatibility preference of the exact EUBO perspective. Importantly, the stop-gradient ensures that the loss updates only the posterior branch. 

Therefore, the $\mathcal{L}_{\mathrm{rev}}$ term used in the main text is best interpreted as an EUBO-motivated practical surrogate, rather than a direct optimization of the exact theoretical EUBO. Its primary purpose is not to optimize a formal upper bound directly, but to robustly regularize the posterior toward inference-compatible support defined by the learned prior.

\begin{figure}[t]
  \centering
  \includegraphics[width=0.8\columnwidth]{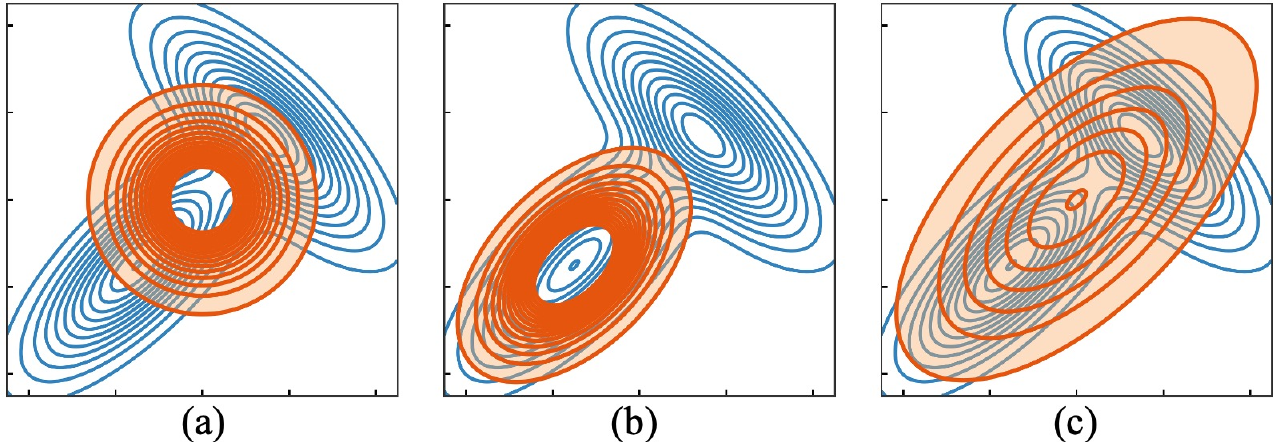}
  \caption{
Geometric illustration of the two KL directions underlying AMVL on a bimodal latent distribution. 
Blue contours denote the reference distribution, and orange contours denote the optimized approximate distribution. 
(a) Initial mismatch. 
(b) Forward KL alignment yields a sharper, mode-seeking fit. 
(c) Reverse KL regularization encourages broader support coverage.
   }
  \label{viusal_elbo_eubo}
\end{figure}

\subsection{Complementary Roles of Forward and Reverse KL}
\label{appendix:complementary_kl}

The two KL directions in AMVL play different and complementary roles:
\begin{align}
\mathcal{L}_{\mathrm{fwd}}
&=
D_{\mathrm{KL}}\!\left(\mathrm{sg}[q_\phi(\mathbf{Z}|\mathbf{x},\mathbf{y})]\,\|\,p_\theta(\mathbf{Z}|\mathbf{x})\right),\\
\mathcal{L}_{\mathrm{rev}}
&=
D_{\mathrm{KL}}\!\left(\mathrm{sg}[p_\theta(\mathbf{Z}|\mathbf{x})]\,\|\,q_\phi(\mathbf{Z}|\mathbf{x},\mathbf{y})\right).
\end{align}
Because of the stop-gradient operators, the two objectives affect different parameter sets. The forward term $\mathcal{L}_{\mathrm{fwd}}$ updates the prior while keeping the posterior fixed, thereby calibrating the target-agnostic prior toward posterior-inferred latent reasoning states. In contrast, the reverse term $\mathcal{L}_{\mathrm{rev}}$ updates the posterior while keeping the prior fixed, thereby discouraging posterior solutions that are excessively sharp, overly target-specific, or poorly supported by the learned prior. 
It is crucial to understand the geometric nature of these KL terms. In both \(\mathcal{L}_{\mathrm{fwd}}\) and \(\mathcal{L}_{\mathrm{rev}}\), the optimization is performed with respect to the \emph{second argument} of the KL divergence (\(p\) in \(\mathcal{L}_{\mathrm{fwd}}\) and \(q\) in \(\mathcal{L}_{\mathrm{rev}}\)). Minimizing \(D_{\mathrm{KL}}(P \,\|\, Q)\) with respect to \(Q\) is known to have a mass-covering effect, forcing the optimized distribution \(Q\) to spread out and cover the support of the fixed distribution \(P\). Therefore, both KL terms in AMVL are mass-covering, not mode-seeking. The complementarity arises from the \emph{asymmetric direction} of this mass-covering behavior:
\begin{itemize}
    \item \textbf{Forward KL (\(\mathcal{L}_{\mathrm{fwd}}\)) forces the prior \(p\) to cover the posterior \(q\).} This encourages the prior to be expressive enough to represent all reasoning states the posterior finds useful during training.
    \item \textbf{Reverse KL (\(\mathcal{L}_{\mathrm{rev}}\)) forces the posterior \(q\) to cover the prior \(p\).} This regularizes the posterior, preventing it from collapsing into a narrow, answer-dependent mode that is unsupported by the prior (i.e., a region where \(p(\mathbf{z})\) is low).
\end{itemize}
In essence, AMVL establishes a mutual mass-covering dynamic in which the prior and posterior are encouraged to cover each other. This differs from the common mode-seeking versus mass-covering trade-off and is the key to calibrating the latent space. The illustration in Figure~\ref{viusal_elbo_eubo}, which contrasts mode-seeking and mass-covering, should be interpreted with this understanding: both KL objectives in AMVL exhibit the mass-covering property illustrated in Figure~\ref{viusal_elbo_eubo}(c), but they are applied in opposing directions between the prior and posterior.

This separation is important in our setting. If only the forward KL is used, the prior is trained to chase the posterior, but the posterior itself is weakly constrained with respect to inference-time usability. If only the reverse term is used, posterior spread may improve, but the prior is not explicitly trained to match posterior-inferred reasoning states. Their combination therefore reduces train-inference mismatch from both directions: the prior is encouraged to approach the posterior, and the posterior is simultaneously discouraged from drifting too far away from what the prior can support at inference time.

\subsection{Diagonal Gaussian Case and Closed-Form Effects}
\label{appendix:gaussian_kl}

In our implementation, both the prior and posterior are factorized diagonal Gaussians~\cite{kingma2013auto, rezende2014stochastic} over the $k$ latent slots:
\begin{equation}
p_\theta(\mathbf{Z}|\mathbf{x})=\prod_{i=1}^k p_\theta(\mathbf{z}_i|\mathbf{x}),
\qquad
q_\phi(\mathbf{Z}|\mathbf{x},\mathbf{y})=\prod_{i=1}^k q_\phi(\mathbf{z}_i|\mathbf{x},\mathbf{y}).
\end{equation}
Therefore, the KL divergence between the full latent sequences decomposes across slots:
\begin{equation}
D_{\mathrm{KL}}(q_\phi(\mathbf{Z}|\mathbf{x},\mathbf{y})\,\|\,p_\theta(\mathbf{Z}|\mathbf{x}))
=
\sum_{i=1}^k
D_{\mathrm{KL}}(q_\phi(\mathbf{z}_i|\mathbf{x},\mathbf{y})\,\|\,p_\theta(\mathbf{z}_i|\mathbf{x})),
\end{equation}
and similarly for the reverse direction.

For two diagonal Gaussian distributions
\[
\mathcal{N}_1=\mathcal{N}(\boldsymbol{\mu}_1,\mathrm{diag}(\boldsymbol{\sigma}_1^2)),
\qquad
\mathcal{N}_2=\mathcal{N}(\boldsymbol{\mu}_2,\mathrm{diag}(\boldsymbol{\sigma}_2^2)),
\]
the KL divergence has the closed form
\begin{equation}
D_{\mathrm{KL}}(\mathcal{N}_1\,\|\,\mathcal{N}_2)
=
\frac{1}{2}
\sum_{j=1}^{d}
\left(
\frac{\sigma_{1,j}^2 + (\mu_{1,j}-\mu_{2,j})^2}{\sigma_{2,j}^2}
-1
+\log\frac{\sigma_{2,j}^2}{\sigma_{1,j}^2}
\right).
\end{equation}

This expression makes the effects of the two KL directions explicit. For the forward KL, large mean mismatch $(\mu_{1,j}-\mu_{2,j})^2$ and insufficient prior variance are penalized, which pushes the prior toward posterior-inferred latent states. For the reverse KL, the posterior is penalized when it assigns too little variance or too little mass to regions supported by the prior, thereby discouraging posterior over-concentration and improving support compatibility.

Since our implementation predicts log-variance, it is useful to rewrite the KL in terms of
\[
\mathbf{L} = \log \boldsymbol{\sigma}^2.
\]
Substituting $L_{1,j}=\log \sigma_{1,j}^2$ and $L_{2,j}=\log \sigma_{2,j}^2$ into the component-wise formula, we obtain:
\begin{equation}
D_{\mathrm{KL}}(\mathcal{N}_1\,\|\,\mathcal{N}_2)
=
\frac{1}{2}
\sum_{j=1}^{d}
\left(
\exp(L_{1,j}-L_{2,j})
+
(\mu_{1,j}-\mu_{2,j})^2 \exp(-L_{2,j})
-1
-(L_{1,j}-L_{2,j})
\right).
\end{equation}
For the reverse direction (where $\mathcal{N}_2$ represents the posterior), when the posterior log-variance $L_{2,j}$ becomes excessively small, both the variance-ratio term and the mean-mismatch term can grow rapidly. Thus, minimizing $D_{\mathrm{KL}}(p_\theta\|q_\phi)$ explicitly penalizes posterior over-concentration relative to the learned prior.

In practice, we compute both KL terms in closed form for each latent slot and then average over latent slots and batch elements.

\section{Theoretical Analysis of Answer Leakage and AMVL}
\label{appendix:theory_amvl_leakage}

We provide a theoretical analysis of why bidirectional KL regularization in AMVL better mitigates answer leakage than standard one-sided ELBO training.

\paragraph{Setup.} Our formulation builds on the standard conditional variational framework~\cite{kingma2013auto,sohn2015learning}.
Let
$p_\theta(\mathbf z \mid \mathbf x)$
denote the inference-time prior, and
$q_\phi(\mathbf z \mid \mathbf x, \mathbf y)
$
the training-time posterior. We consider the common diagonal Gaussian parameterization
\[
q_\phi(\mathbf z \mid \mathbf x, \mathbf y)
= \mathcal N\!\big(\mu_\phi(\mathbf x,\mathbf y), \mathrm{diag}(\sigma_\phi^2(\mathbf x,\mathbf y))\big),
\]
\[
p_\theta(\mathbf z \mid \mathbf x)
= \mathcal N\!\big(\mu_\theta(\mathbf x), \mathrm{diag}(\sigma_\theta^2(\mathbf x))\big).
\]

To formalize answer leakage, we decompose the posterior mean into an input-grounded component and an answer-dependent shift:
\begin{equation}
\mu_\phi(\mathbf x,\mathbf y)
=
\mu_\phi^{(x)}(\mathbf x)
+
\delta(\mathbf x,\mathbf y),
\label{eq:mu_decomp}
\end{equation}
where $\delta(\mathbf x,\mathbf y)$ denotes the additional posterior displacement induced by the target answer $\mathbf y$. When $\delta \equiv 0$, the posterior mean is fully grounded in the input $\mathbf x$, with no answer-dependent mean shift.

We further define the conditional mean leakage
\begin{equation}
\bar{\delta}(\mathbf x)
:=
\mathbb E_{\mathbf y \sim p(\mathbf y \mid \mathbf x)}[\delta(\mathbf x,\mathbf y)].
\label{eq:mean_leakage}
\end{equation}

In this appendix, we focus on \emph{mean-level leakage}, i.e., answer-dependent posterior drift in the latent mean. This restriction is deliberate: mean drift alone already suffices to induce prior contamination under one-sided KL matching, and it admits a clean closed-form analysis of both the contaminated prior optimum and the reverse-KL corrective gradient. We leave leakage through answer-dependent posterior variance or higher-order statistics to future work.

\paragraph{Ideal inference-time latent distribution.}
Let
$
p^*(\mathbf z \mid \mathbf x)
$
denote an ideal inference-time latent distribution that depends only on $\mathbf x$ and supports optimal downstream prediction. In the propositions below, we will explicitly state when we assume that the mean of $p^*$ matches the input-grounded posterior component:
\begin{equation}
\mu^*(\mathbf x) = \mu_\phi^{(x)}(\mathbf x).
\label{eq:ideal_input_grounded}
\end{equation}
This assumption formalizes the desideratum that the ideal test-time latent state should be input-grounded rather than answer-conditioned.

\subsection{Prior Contamination Under ELBO}
\label{appendix:prior_contamination_proof}

The KL term in the standard conditional ELBO is
\begin{equation}
\mathcal L_{\mathrm{ELBO}}^{\mathrm{KL}}
=
\mathbb E_{(\mathbf x,\mathbf y)\sim p_{\mathrm{data}}}
\left[
D_{KL}\big(q_\phi(\mathbf z\mid \mathbf x,\mathbf y)\,\|\,p_\theta(\mathbf z\mid \mathbf x)\big)
\right].
\label{eq:elbo_kl}
\end{equation}

\begin{proposition}[Prior contamination under ELBO]
\label{prop:prior_contamination}
Assume diagonal Gaussian latents with fixed posterior $q_\phi$. Then minimizing the ELBO KL term with respect to the prior mean yields
\begin{equation}
\mu_\theta^*(\mathbf x)
=
\mathbb E_{\mathbf y \sim p(\mathbf y\mid \mathbf x)}
\left[
\mu_\phi(\mathbf x,\mathbf y)
\right]
=
\mu_\phi^{(x)}(\mathbf x)+\bar{\delta}(\mathbf x).
\label{eq:prior_optimum_elbo}
\end{equation}
Consequently, if the ideal inference-time latent mean is input-grounded as in \eqref{eq:ideal_input_grounded}, then one-sided ELBO matching causes the learned prior mean to inherit the average answer-dependent shift as residual bias. In particular, if $p^*(\mathbf z \mid \mathbf x)$ shares the same diagonal covariance as the learned prior, then
\begin{equation}
D_{KL}\!\big(p^*(\mathbf z\mid \mathbf x)\,\|\,p_\theta^*(\mathbf z\mid \mathbf x)\big)
=
\frac{1}{2}
\sum_{j=1}^d
\frac{\bar{\delta}_j(\mathbf x)^2}{\sigma_{\theta,j}^2(\mathbf x)}.
\label{eq:prior_contamination_exact}
\end{equation}
More generally, \eqref{eq:prior_contamination_exact} isolates the contribution due purely to mean contamination.
\end{proposition}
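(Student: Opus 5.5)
The plan is to fix $\mathbf x$ and reduce the minimization of \eqref{eq:elbo_kl} with respect to the prior mean to a pointwise, per-coordinate quadratic problem. Writing $p_{\mathrm{data}}(\mathbf x,\mathbf y)=p(\mathbf x)p(\mathbf y\mid\mathbf x)$, the expected KL becomes $\mathbb E_{\mathbf x}\,\mathbb E_{\mathbf y\sim p(\mathbf y\mid\mathbf x)}[D_{KL}(q_\phi\|p_\theta)]$. Since $\mu_\theta(\mathbf x)$ is treated as a free function of $\mathbf x$, this is minimized by minimizing the inner conditional expectation separately for each $\mathbf x$. This is the standard nonparametric or sufficiently-expressive-prior idealization, and I will state it explicitly.

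Next I substitute the closed form \eqref{eq:gaussian_kl} with $\mathcal N_1=q_\phi$ and $\mathcal N_2=p_\theta$. The only dependence on $\mu_{\theta,j}$ is through
\[
\tfrac12\,\mathbb E_{\mathbf y}\!\big[(\mu_{\phi,j}(\mathbf x,\mathbf y)-\mu_{\theta,j}(\mathbf x))^2\big]/\sigma^2_{\theta,j}(\mathbf x).
\]
Here the prior variance does not depend on $\mathbf y$, and the posterior is held fixed, so the variance terms are constants with respect to $\mu_\theta$. This is strictly convex in $\mu_{\theta,j}$. Setting its derivative to zero gives $\mu_{\theta,j}^*=\mathbb E_{\mathbf y}[\mu_{\phi,j}(\mathbf x,\mathbf y)]$, independently of the value of $\sigma_\theta^2$. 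This also covers a joint optimization over $(\mu_\theta,\sigma_\theta)$, since the stationarity condition in $\mu$ decouples.

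I then insert the decomposition \eqref{eq:mu_decomp}. Because $\mu^{(x)}_\phi$ does not depend on $\mathbf y$, linearity of expectation together with \eqref{eq:mean_leakage} yields $\mu_\theta^*=\mu_\phi^{(x)}+\bar\delta$, which is \eqref{eq:prior_optimum_elbo}. Under \eqref{eq:ideal_input_grounded} the residual is exactly $\mu_\theta^*-\mu^*=\bar\delta(\mathbf x)$.

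Finally, I apply \eqref{eq:gaussian_kl} once more with $\mathcal N_1=p^*$ and $\mathcal N_2=p^*_\theta$, which share the variance $\sigma_\theta^2$. The variance-ratio term equals $1$ and cancels the $-1$, and the log term vanishes. What remains is $\tfrac12\sum_j\bar\delta_j^2/\sigma_{\theta,j}^2$, which is \eqref{eq:prior_contamination_exact}. For the closing remark about isolating the mean contribution, I note that with unequal variances the general formula splits additively into a variance-only part plus the same mean term evaluated with the prior's variances. \eqref{eq:prior_contamination_exact} is therefore precisely the mean-mismatch summand.

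The main obstacle is not computational. It is justifying the pointwise minimization, i.e. that the prior family can realize $\mathbb E_{\mathbf y}[\mu_\phi]$ as a function of $\mathbf x$ and that the KL is measured with $\mathbf y\sim p(\mathbf y\mid\mathbf x)$ under the data distribution. I would handle this by stating the expressivity assumption up front. I would also note that for a parametric prior the same conclusion holds at any optimum attaining the infimum.
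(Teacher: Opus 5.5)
Your proposal is correct and follows essentially the same route as the paper: reduce the problem to the per-coordinate expected quadratic mean-mismatch term, whose weight $1/\sigma^2_{\theta,j}(\mathbf x)$ does not depend on $\mathbf y$, set its derivative to zero to obtain the conditional mean, substitute the decomposition, and then evaluate the shared-covariance Gaussian KL so that the variance-ratio and log terms cancel. Two of your additions are sensible refinements of details the paper leaves implicit: stating the expressivity assumption that justifies pointwise minimization, and noting that the general-variance KL splits additively into a variance-only part plus exactly this mean term.
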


\begin{proof}
For fixed posterior $q_\phi$, minimizing
\[
\mathbb E_{\mathbf y\mid \mathbf x}
\left[
D_{KL}\big(q_\phi(\mathbf z\mid \mathbf x,\mathbf y)\,\|\,p_\theta(\mathbf z\mid \mathbf x)\big)
\right]
\]
with respect to the Gaussian prior mean $\mu_\theta(\mathbf x)$ is equivalent to minimizing, for each $\mathbf x$, the expected quadratic mean-mismatch term
\[
\mathbb E_{\mathbf y\mid \mathbf x}
\left[
\sum_{j=1}^d
\frac{\big(\mu_{\phi,j}(\mathbf x,\mathbf y)-\mu_{\theta,j}(\mathbf x)\big)^2}{2\sigma_{\theta,j}^2(\mathbf x)}
\right].
\]
Taking derivatives with respect to $\mu_{\theta,j}(\mathbf x)$ and setting them to zero yields
\[
\mu_{\theta,j}^*(\mathbf x)
=
\mathbb E_{\mathbf y\mid \mathbf x}\big[\mu_{\phi,j}(\mathbf x,\mathbf y)\big].
\]
Using the decomposition in \eqref{eq:mu_decomp},
\[
\mu_{\phi,j}(\mathbf x,\mathbf y)
=
\mu_{\phi,j}^{(x)}(\mathbf x)+\delta_j(\mathbf x,\mathbf y),
\]
we obtain
\[
\mu_{\theta,j}^*(\mathbf x)
=
\mathbb E_{\mathbf y\mid \mathbf x}
\big[
\mu_{\phi,j}^{(x)}(\mathbf x)+\delta_j(\mathbf x,\mathbf y)
\big]
=
\mu_{\phi,j}^{(x)}(\mathbf x)
+
\mathbb E_{\mathbf y\mid \mathbf x}\big[\delta_j(\mathbf x,\mathbf y)\big]
=
\mu_{\phi,j}^{(x)}(\mathbf x)+\bar\delta_j(\mathbf x).
\]
Stacking coordinates gives
\[
\mu_\theta^*(\mathbf x)
=
\mu_\phi^{(x)}(\mathbf x)+\bar\delta(\mathbf x),
\]
which proves \eqref{eq:prior_optimum_elbo}.

Now assume that the ideal inference-time latent distribution $p^*(\mathbf z\mid \mathbf x)$ has mean
\[
\mu^*(\mathbf x)=\mu_\phi^{(x)}(\mathbf x)
\]
as in \eqref{eq:ideal_input_grounded}, and that $p^*(\mathbf z\mid \mathbf x)$ shares the same diagonal covariance as the learned prior $p_\theta^*(\mathbf z\mid \mathbf x)$, namely $\mathrm{diag}(\sigma_\theta^2(\mathbf x))$. Then the KL divergence between these two diagonal Gaussians is
\[
D_{KL}\!\big(p^*(\mathbf z\mid \mathbf x)\,\|\,p_\theta^*(\mathbf z\mid \mathbf x)\big)
=
\frac{1}{2}
\sum_{j=1}^d
\left(
\frac{\sigma_{\theta,j}^2(\mathbf x)}{\sigma_{\theta,j}^2(\mathbf x)}
+
\frac{\big(\mu_j^*(\mathbf x)-\mu_{\theta,j}^*(\mathbf x)\big)^2}{\sigma_{\theta,j}^2(\mathbf x)}
-
1
+
\log\frac{\sigma_{\theta,j}^2(\mathbf x)}{\sigma_{\theta,j}^2(\mathbf x)}
\right).
\]
Because the two distributions share the same covariance, the variance-ratio and log-determinant terms cancel, leaving
\[
D_{KL}\!\big(p^*(\mathbf z\mid \mathbf x)\,\|\,p_\theta^*(\mathbf z\mid \mathbf x)\big)
=
\frac{1}{2}
\sum_{j=1}^d
\frac{\big(\mu_j^*(\mathbf x)-\mu_{\theta,j}^*(\mathbf x)\big)^2}{\sigma_{\theta,j}^2(\mathbf x)}.
\]
Substituting $\mu^*(\mathbf x)=\mu_\phi^{(x)}(\mathbf x)$ and
\[
\mu_\theta^*(\mathbf x)
=
\mu_\phi^{(x)}(\mathbf x)+\bar\delta(\mathbf x)
\]
gives
\[
\mu_j^*(\mathbf x)-\mu_{\theta,j}^*(\mathbf x)
=
-\bar\delta_j(\mathbf x),
\]
and therefore
\[
D_{KL}\!\big(p^*(\mathbf z\mid \mathbf x)\,\|\,p_\theta^*(\mathbf z\mid \mathbf x)\big)
=
\frac{1}{2}
\sum_{j=1}^d
\frac{\bar\delta_j(\mathbf x)^2}{\sigma_{\theta,j}^2(\mathbf x)},
\]
which is exactly \eqref{eq:prior_contamination_exact}. Under the shared-covariance assumption, this expression isolates the contribution due purely to mean contamination.
\end{proof}

Proposition~\ref{prop:prior_contamination} formalizes the central problem of one-sided ELBO training in our setting: answer-dependent posterior bias is not merely tolerated during training, but is absorbed by the inference-time prior~\cite{hoffman2016elbo, zhao2017infovae}.

\subsection{Why Forward Alignment Alone Is Insufficient}
\label{appendix:fwd_same_optimum_proof}

AMVL uses the forward alignment term
\begin{equation}
\mathcal L_{\mathrm{fwd}}
=
\mathbb E_{(\mathbf x,\mathbf y)\sim p_{\mathrm{data}}}
\left[
D_{KL}\big(\mathrm{sg}[q_\phi(\mathbf z\mid \mathbf x,\mathbf y)] \,\|\, p_\theta(\mathbf z\mid \mathbf x)\big)
\right].
\label{eq:appendix_fwd}
\end{equation}

\begin{proposition}[Forward alignment alone is insufficient]
\label{prop:fwd_insufficient}
For fixed posterior $q_\phi$, minimizing $\mathcal L_{\mathrm{fwd}}$ over $\theta$ yields the same optimal prior mean as minimizing the KL term in the standard ELBO:
\begin{equation}
\mu_\theta^{*,\mathrm{fwd}}(\mathbf x)
=
\mathbb E_{\mathbf y\mid \mathbf x}[\mu_\phi(\mathbf x,\mathbf y)].
\label{eq:fwd_same_optimum}
\end{equation}
Moreover,
\[
\nabla_\phi \mathcal L_{\mathrm{fwd}} = 0.
\]
Therefore, in isolation, forward alignment calibrates the prior to the current posterior but exerts no direct corrective gradient on answer-dependent bias already present in the posterior.
\end{proposition}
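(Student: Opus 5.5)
The plan has two parts, matching the two claims of Proposition~\ref{prop:fwd_insufficient}. For the first claim, I will observe that the stop-gradient does not change the \emph{value} of the forward KL as a function of $\theta$. It only changes which parameters receive gradients. Hence, for fixed $q_\phi$, the objective $\mathcal L_{\mathrm{fwd}}$ in \eqref{eq:appendix_fwd}, viewed as a function of $\theta$, coincides with the ELBO KL term $\mathcal L_{\mathrm{ELBO}}^{\mathrm{KL}}$ in \eqref{eq:elbo_kl}. I will then invoke Proposition~\ref{prop:prior_contamination} directly: its proof minimizes exactly this expected quadratic mean-mismatch term, $\mathbb E_{\mathbf y\mid\mathbf x}\big[\sum_j (\mu_{\phi,j}-\mu_{\theta,j})^2/(2\sigma_{\theta,j}^2)\big]$, pointwise in $\mathbf x$. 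That yields $\mu_\theta^{*,\mathrm{fwd}}(\mathbf x)=\mathbb E_{\mathbf y\mid\mathbf x}[\mu_\phi(\mathbf x,\mathbf y)]=\mu_\phi^{(x)}(\mathbf x)+\bar\delta(\mathbf x)$, which is \eqref{eq:fwd_same_optimum}.

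For completeness, I will recheck the first-order condition. For any fixed $\sigma_\theta^2(\mathbf x)>0$, the quadratic in $\mu_{\theta,j}$ is strictly convex, and its derivative vanishes at the conditional mean. The variance terms of the closed-form Gaussian KL do not involve $\mu_\theta$, so the optimal mean does not depend on how $\sigma_\theta$ is optimized. I will also note the usual idealization: the prior family is assumed flexible enough to represent this minimizer as a function of $\mathbf x$.

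For the second claim, I will argue from the definition of $\mathrm{sg}[\cdot]$. It acts as the identity in the forward pass and has zero Jacobian in the backward pass. In $\mathcal L_{\mathrm{fwd}}$, $\phi$ enters only through $\mu_\phi,\sigma_\phi$, both wrapped by $\mathrm{sg}$. The prior $p_\theta$ depends on $\theta$ alone, under the paper's parameter split in the notation table. By the chain rule, every path from $\phi$ to the loss passes through a zero Jacobian, so $\nabla_\phi\mathcal L_{\mathrm{fwd}}=0$.

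Since the posterior is therefore unchanged by this term, $\delta(\mathbf x,\mathbf y)$ and hence $\bar\delta(\mathbf x)$ persist. The prior simply tracks the contaminated optimum, and the contamination expression \eqref{eq:prior_contamination_exact} carries over verbatim.

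The only delicate point is the shared-backbone subtlety. The paper says the same head and MLLM produce both prior and posterior, so $\theta$ and $\phi$ may physically overlap. I will handle this by stating the result in terms of the formal parameter split, or equivalently by noting that the stop-gradient blocks the posterior-branch computation path, so no gradient flows to the posterior outputs $(\mu_\phi,\sigma_\phi)$ through this term. The claim is then about the absence of a corrective signal on the answer-dependent posterior mean, which is what the proposition asserts.
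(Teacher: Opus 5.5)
Your proposal is correct and follows the paper's proof. The stop-gradient makes $\nabla_\phi\mathcal L_{\mathrm{fwd}}=0$. As a function of $\theta$, $\mathcal L_{\mathrm{fwd}}$ has the same value as the ELBO KL term, so the stationarity condition from Proposition~\ref{prop:prior_contamination} gives the conditional-mean optimum. Your extra remarks are correct refinements that the paper leaves implicit: the optimal mean does not depend on how $\sigma_\theta(\mathbf x)$ is chosen, and because the implementation shares the backbone and head, $\nabla_\phi\mathcal L_{\mathrm{fwd}}=0$ has to be read via the formal $\theta/\phi$ split or as blocking gradient to the posterior outputs.
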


\begin{proof}
The stop-gradient operator treats $q_\phi$ as constant with respect to $\phi$, hence
\[
\nabla_\phi \mathcal L_{\mathrm{fwd}} = 0.
\]
With respect to $\theta$, the objective is identical to minimizing the same forward KL over the prior, so the stationary condition is exactly the same as in Proposition~\ref{prop:prior_contamination}, yielding \eqref{eq:fwd_same_optimum}.
\end{proof}

Proposition~\ref{prop:fwd_insufficient} clarifies the role of forward KL: it is necessary for prior calibration, but by itself it cannot eliminate answer leakage at the source.

\subsection{Reverse KL Suppresses Incompatible Posterior Drift}
\label{appendix:reverse_restoring_proof}

AMVL further introduces the reverse regularizer
\begin{equation}
\mathcal L_{\mathrm{rev}}
=
\mathbb E_{(\mathbf x,\mathbf y)\sim p_{\mathrm{data}}}
\left[
D_{KL}\big(\mathrm{sg}[p_\theta(\mathbf z\mid \mathbf x)] \,\|\, q_\phi(\mathbf z\mid \mathbf x,\mathbf y)\big)
\right].
\label{eq:appendix_rev}
\end{equation}

\begin{proposition}[Reverse KL suppresses incompatible posterior drift]
\label{prop:reverse_restoring}
For diagonal Gaussian latents, the reverse KL in \eqref{eq:appendix_rev} has the closed form
\begin{equation}
\mathcal L_{\mathrm{rev}}
=
\frac{1}{2}
\sum_{j=1}^d
\left(
\frac{\sigma_{\theta,j}^2}{\sigma_{\phi,j}^2}
+
\frac{(\mu_{\theta,j}-\mu_{\phi,j})^2}{\sigma_{\phi,j}^2}
-
1
+
\log\frac{\sigma_{\phi,j}^2}{\sigma_{\theta,j}^2}
\right),
\label{eq:reverse_closed_form}
\end{equation}
where $p_\theta$ is treated as constant by stop-gradient. Its gradient with respect to the posterior mean is
\begin{equation}
\frac{\partial \mathcal L_{\mathrm{rev}}}{\partial \mu_{\phi,j}}
=
\frac{\mu_{\phi,j}-\mu_{\theta,j}}{\sigma_{\phi,j}^2}.
\label{eq:reverse_mu_grad}
\end{equation}
Thus, reverse KL penalizes posterior drift away from prior-compatible high-density latent regions, with a mean-mismatch penalty that becomes stronger when the posterior becomes sharper.
\end{proposition}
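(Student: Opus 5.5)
The statement follows from the closed-form KL between diagonal Gaussians already recorded in the main text, so I would derive the closed form by specialization and then differentiate. Set $\mathcal N_1 = p_\theta(\mathbf z\mid\mathbf x)$ and $\mathcal N_2 = q_\phi(\mathbf z\mid\mathbf x,\mathbf y)$ in the diagonal Gaussian KL formula of Appendix~\ref{appendix:gaussian_kl}, with $\mu_1=\mu_\theta$, $\sigma_1^2=\sigma_\theta^2$, $\mu_2=\mu_\phi$, $\sigma_2^2=\sigma_\phi^2$. Splitting the fraction $(\sigma_{\theta,j}^2+(\mu_{\theta,j}-\mu_{\phi,j})^2)/\sigma_{\phi,j}^2$ into its two summands gives \eqref{eq:reverse_closed_form} coordinatewise. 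If one wants the formula from first principles rather than citing it, I would first use the factorization across coordinates and slots, which makes the KL additive. Then, for each scalar coordinate, I would write $\mathbb E_{p}[\log p - \log q]$ using the Gaussian log-densities and the moments $\mathbb E_p[(z-\mu_\phi)^2] = \sigma_\theta^2 + (\mu_\theta-\mu_\phi)^2$ and $\mathbb E_p[(z-\mu_\theta)^2]=\sigma_\theta^2$.

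The stop-gradient on $p_\theta$ means $\mu_\theta$ and $\sigma_\theta^2$ are constants for the $\phi$-derivative. In \eqref{eq:reverse_closed_form}, only the mean-mismatch term $\tfrac12(\mu_{\theta,j}-\mu_{\phi,j})^2/\sigma_{\phi,j}^2$ depends on $\mu_{\phi,j}$. I would treat $\mu_{\phi,j}$ and $\sigma_{\phi,j}^2$ as independent coordinates of the posterior's output parameters, which is how the statement is phrased. Differentiating this term gives $(\mu_{\phi,j}-\mu_{\theta,j})/\sigma_{\phi,j}^2$, which is \eqref{eq:reverse_mu_grad}. When the expectation over $p_{\mathrm{data}}$ in \eqref{eq:appendix_rev} is included, the statement should be read pointwise per $(\mathbf x,\mathbf y)$, since the derivative passes inside the expectation for each sample's posterior parameters.

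The interpretive claim then follows from two observations. The gradient-descent direction $-\partial\mathcal L_{\mathrm{rev}}/\partial\mu_{\phi,j}$ points from $\mu_{\phi,j}$ toward $\mu_{\theta,j}$, so it pulls the posterior back toward the prior's high-density center. Its magnitude scales as $1/\sigma_{\phi,j}^2$, so it grows as the posterior sharpens. I would also note that the variance terms $\sigma_{\theta,j}^2/\sigma_{\phi,j}^2+\log\sigma_{\phi,j}^2$ diverge as $\sigma_{\phi,j}^2\to 0$, which reinforces the penalty on over-concentration. There is no real obstacle here. The only point needing care is making explicit that the stop-gradient freezes $\sigma_\theta^2$, so the $\log\sigma_{\theta,j}^2$ and variance-ratio terms contribute nothing to the mean gradient.
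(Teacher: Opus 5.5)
Your proposal is correct and follows essentially the same route as the paper: specialize the standard diagonal-Gaussian KL with $p_\theta$ as the first argument, differentiate the quadratic mean-mismatch term, and read off both the restoring direction toward $\mu_{\theta,j}$ and the $1/\sigma_{\phi,j}^2$ scaling. The paper also records the second derivative $1/\sigma_{\phi,j}^2>0$, which makes $\mu_{\phi,j}=\mu_{\theta,j}$ the unique global minimizer in the mean, and rewrites the gradient in terms of the leakage shift $\delta_j$; neither step is needed for the statement as posed.
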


\begin{proof}
The expression in \eqref{eq:reverse_closed_form} is the standard KL divergence between two diagonal Gaussians, with the first argument fixed by stop-gradient. Differentiating the quadratic mean term yields \eqref{eq:reverse_mu_grad}.

To characterize the effect of this gradient, set \eqref{eq:reverse_mu_grad} to zero:
\[
\frac{\mu_{\phi,j}-\mu_{\theta,j}}{\sigma_{\phi,j}^2}=0
\quad\Longrightarrow\quad
\mu_{\phi,j}=\mu_{\theta,j}.
\]
Since $\sigma_{\phi,j}^2>0$, the objective is strictly convex in $\mu_{\phi,j}$, with
\[
\frac{\partial^2 \mathcal L_{\mathrm{rev}}}{\partial \mu_{\phi,j}^2}
=
\frac{1}{\sigma_{\phi,j}^2}>0,
\]
so this stationary point is unique and is the global minimizer with respect to $\mu_{\phi,j}$.

Moreover, the gradient has the same sign as the deviation $(\mu_{\phi,j}-\mu_{\theta,j})$:
\[
\frac{\partial \mathcal L_{\mathrm{rev}}}{\partial \mu_{\phi,j}}
\begin{cases}
>0, & \mu_{\phi,j}>\mu_{\theta,j},\\[4pt]
<0, & \mu_{\phi,j}<\mu_{\theta,j}.
\end{cases}
\]
Therefore, gradient descent on $\mathcal L_{\mathrm{rev}}$ exerts a restoring force that always pulls $\mu_{\phi,j}$ toward $\mu_{\theta,j}$. Its magnitude is
\[
\left|
\frac{\partial \mathcal L_{\mathrm{rev}}}{\partial \mu_{\phi,j}}
\right|
=
\frac{|\mu_{\phi,j}-\mu_{\theta,j}|}{\sigma_{\phi,j}^2},
\]
which increases as $\sigma_{\phi,j}^2$ decreases. Thus, the penalty against posterior drift becomes stronger when the posterior becomes sharper.

Substituting the decomposition in \eqref{eq:mu_decomp} into \eqref{eq:reverse_mu_grad} gives
\begin{equation}
\frac{\partial \mathcal L_{\mathrm{rev}}}{\partial \delta_j}
=
\frac{\mu_{\phi,j}^{(x)}(\mathbf x)+\delta_j(\mathbf x,\mathbf y)-\mu_{\theta,j}(\mathbf x)}{\sigma_{\phi,j}^2(\mathbf x,\mathbf y)}.
\label{eq:reverse_delta_grad}
\end{equation}
Define the prior-centering bias
\[
b_j(\mathbf x):=\mu_{\theta,j}(\mathbf x)-\mu_{\phi,j}^{(x)}(\mathbf x).
\]
Then \eqref{eq:reverse_delta_grad} can be rewritten as
\[
\frac{\partial \mathcal L_{\mathrm{rev}}}{\partial \delta_j}
=
\frac{\delta_j(\mathbf x,\mathbf y)-b_j(\mathbf x)}{\sigma_{\phi,j}^2(\mathbf x,\mathbf y)}.
\]
Hence, reverse KL penalizes answer-dependent posterior drift relative to the current prior center. In the low-contamination regime where $b_j(\mathbf x)\approx 0$---for example, after bidirectional calibration has partially aligned the prior with the input-grounded component---the gradient simplifies to
\begin{equation}
\frac{\partial \mathcal L_{\mathrm{rev}}}{\partial \delta_j}
\approx
\frac{\delta_j(\mathbf x,\mathbf y)}{\sigma_{\phi,j}^2(\mathbf x,\mathbf y)},
\label{eq:reverse_delta_grad_simplified}
\end{equation}
which directly suppresses answer-dependent posterior shift. This effect is strongest precisely when the posterior becomes sharply concentrated.
\end{proof}

\subsection{Bidirectional Calibration Reduces Prior Contamination}
\label{appendix:amvl_shrinkage_proof}

We now formalize the local shrinkage effect of bidirectional calibration under a simplified linear-response model.

Recall that $\gamma$ is the scalar weight controlling the strength of the reverse KL regularization term $\mathcal L_{\mathrm{rev}}$ in the AMVL training objective (Eq.~29). A larger $\gamma$ imposes a stronger restoring force against answer-dependent posterior drift, as formalized below.

\begin{assumption}[Local linear leakage model]
\label{assump:linear_leakage}
In a local neighborhood of training, the posterior mean admits the form
\begin{equation}
\mu_\phi(\mathbf x,\mathbf y)
=
\mu^{(x)}(\mathbf x)
+
\alpha f(\mathbf x,\mathbf y),
\label{eq:linear_leakage_model}
\end{equation}
where $\alpha \ge 0$ denotes a scalar leakage coefficient and $f(\mathbf x,\mathbf y)$ is an answer-dependent direction. Let
\[
\bar f(\mathbf x)
:=
\mathbb E_{\mathbf y \sim p(\mathbf y \mid \mathbf x)}
\big[f(\mathbf x,\mathbf y)\big].
\]
\end{assumption}

Assumption~\ref{assump:linear_leakage} is a local linearization of the general leakage decomposition in Eq.~\eqref{eq:mu_decomp}, which writes
\[
\mu_\phi(\mathbf x,\mathbf y)
=
\mu_\phi^{(x)}(\mathbf x)
+
\delta(\mathbf x,\mathbf y)
\]
for a free vector-valued answer-dependent shift $\delta(\mathbf x,\mathbf y)\in\mathbb R^d$. Here we refine that decomposition by factoring
\[
\delta(\mathbf x,\mathbf y)=\alpha f(\mathbf x,\mathbf y),
\]
where $\alpha$ captures a shared local leakage amplitude and $f(\mathbf x,\mathbf y)$ captures the direction and sample-dependent shape of leakage. Freezing the local shape $f$ and analyzing the scalar coordinate $\alpha$ is what makes the equilibrium analysis tractable. Under this parameterization, the conditional average leakage direction $\bar f(\mathbf x)$ plays the same role as $\bar\delta(\mathbf x)$ in Eq.~\eqref{eq:mean_leakage}, with the correspondence
\[
\bar\delta(\mathbf x)\leftrightarrow \alpha\,\bar f(\mathbf x).
\]

\begin{assumption}[Local linear-response stationary condition]
\label{assump:linear_response}
Near a local stationary point, the training dynamics along the leakage coordinate $\alpha$ admit a first-order linear approximation. In particular, motivated by Proposition~\ref{prop:reverse_restoring}, the reverse KL contributes a restoring force along the leakage coordinate that is locally proportional to
\[
\frac{\gamma\alpha}{\sigma_{\mathrm{eff}}^2},
\]
where $\sigma_{\mathrm{eff}}^2$ denotes an effective posterior variance along the leakage direction. We further assume that, in the local regime of interest, the prior mean is approximately centered on the input-grounded component, i.e.,
\[
\mu_\theta(\mathbf x)\approx \mu^{(x)}(\mathbf x),
\]
so that the reverse-KL contribution along the leakage coordinate is first-order proportional to $\alpha$. Let $\alpha_{\mathrm{ELBO}}$ denote the local equilibrium leakage coefficient under one-sided ELBO training.
\end{assumption}

Here, ``equilibrium'' refers to the stationary point of the leakage-coordinate dynamics during training, i.e., the value of $\alpha$ at which the net gradient force acting on the leakage coefficient is zero. Under one-sided ELBO training, the reconstruction term may favor answer-dependent shortcuts in the posterior, while the forward KL partially resists them; $\alpha_{\mathrm{ELBO}}$ is the local balance point of these effects. Under AMVL, the reverse KL introduces an additional restoring force that pushes $\alpha$ toward zero, thereby shifting the equilibrium to a smaller value.

\begin{proposition}[Bidirectional calibration reduces prior contamination]
\label{prop:amvl_shrinkage}
Under Assumptions~\ref{assump:linear_leakage} and~\ref{assump:linear_response}, there exists a constant $c>0$ such that the local equilibrium leakage coefficient under AMVL satisfies
\begin{equation}
\alpha_{\mathrm{AMVL}}
=
\frac{\alpha_{\mathrm{ELBO}}}{1+c\gamma/\sigma_{\mathrm{eff}}^2}.
\label{eq:alpha_shrinkage}
\end{equation}
Hence, $\alpha_{\mathrm{AMVL}}$ is monotonically decreasing in $\gamma$. Under exact forward matching at the function optimum, the induced prior contamination satisfies
\begin{equation}
\Delta_{\mathrm{AMVL}}(\mathbf x)
=
\|\mu_\theta^{\mathrm{AMVL}}(\mathbf x)-\mu^{(x)}(\mathbf x)\|_2
=
\alpha_{\mathrm{AMVL}}\|\bar f(\mathbf x)\|_2,
\label{eq:amvl_contamination}
\end{equation}
whereas the corresponding contamination under one-sided ELBO matching is
\begin{equation}
\Delta_{\mathrm{ELBO}}(\mathbf x)
=
\alpha_{\mathrm{ELBO}}\|\bar f(\mathbf x)\|_2.
\label{eq:elbo_contamination}
\end{equation}
Therefore,
\begin{equation}
\Delta_{\mathrm{AMVL}}(\mathbf x)
<
\Delta_{\mathrm{ELBO}}(\mathbf x)
\qquad
\text{whenever } \gamma>0 \text{ and } \|\bar f(\mathbf x)\|_2>0.
\label{eq:strict_improvement_local}
\end{equation}
Equality holds only when $\gamma=0$ or $\|\bar f(\mathbf x)\|_2=0$.
\end{proposition}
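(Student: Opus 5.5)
We will prove the proposition by modeling the training dynamics along the scalar leakage coordinate $\alpha$ as a one-dimensional linear stationary-point problem, as permitted by Assumption~\ref{assump:linear_response}, and then transfer the resulting shrinkage of $\alpha$ to the prior via the forward-matching optimum of Proposition~\ref{prop:fwd_insufficient}.

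\textbf{Step 1 (ELBO equilibrium).} Under one-sided training, we write the net force on $\alpha$ in its first-order form $F_{\mathrm{ELBO}}(\alpha) = a - \kappa\alpha$. Here $a>0$ is the reconstruction drive favoring answer-dependent shortcuts, and $\kappa>0$ is the local curvature, which collects the resistance from the forward KL and the reconstruction term. Linearizing about the stationary point gives $\alpha_{\mathrm{ELBO}} = a/\kappa$, and this quantity is nonnegative because $\alpha\ge 0$.

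\textbf{Step 2 (AMVL equilibrium).} Adding $\gamma\mathcal L_{\mathrm{rev}}$ contributes an extra force along the same coordinate. We compute it by applying the chain rule to \eqref{eq:reverse_delta_grad}: since $\delta=\alpha f$, we have $\partial\mathcal L_{\mathrm{rev}}/\partial\alpha = \mathbb E\big[\sum_j f_j(\delta_j-b_j)/\sigma_{\phi,j}^2\big]$. Setting $b\approx 0$, as Assumption~\ref{assump:linear_response} allows, this becomes $\alpha\,\mathbb E[\sum_j f_j^2/\sigma^2_{\phi,j}]$. We then define $\sigma^2_{\mathrm{eff}}$ through $\mathbb E[\sum_j f_j^2/\sigma_{\phi,j}^2] = \mathbb E\|f\|^2/\sigma^2_{\mathrm{eff}}$, so that the restoring force has the form $\gamma\alpha\,\mathbb E\|f\|^2/\sigma^2_{\mathrm{eff}}$. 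The stationary condition is then $a - \kappa\alpha - \gamma\alpha\,\mathbb E\|f\|^2/\sigma_{\mathrm{eff}}^2 = 0$. With $c := \mathbb E\|f\|^2/\kappa>0$ (positive whenever $f\not\equiv 0$), dividing by $\kappa$ gives \eqref{eq:alpha_shrinkage}. Differentiating that expression in $\gamma$ shows that $\alpha_{\mathrm{AMVL}}$ is monotonically decreasing in $\gamma$.

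\textbf{Step 3 (transfer to the prior).} For a fixed posterior satisfying Assumption~\ref{assump:linear_leakage}, Proposition~\ref{prop:fwd_insufficient} (equivalently Proposition~\ref{prop:prior_contamination}) gives the forward-matching optimum $\mu_\theta^*(\mathbf x)=\mu^{(x)}(\mathbf x)+\alpha\bar f(\mathbf x)$. Taking norms yields $\Delta=\alpha\|\bar f\|_2$ under each scheme, with $\alpha$ replaced by the corresponding equilibrium value; this gives \eqref{eq:amvl_contamination} and \eqref{eq:elbo_contamination}.

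\textbf{Step 4 (strict inequality).} Since $1+c\gamma/\sigma^2_{\mathrm{eff}}>1$ exactly when $\gamma>0$, the strict inequality \eqref{eq:strict_improvement_local} follows whenever $\gamma>0$, $\alpha_{\mathrm{ELBO}}>0$ and $\bar f\neq 0$. Equality holds precisely when one of these factors vanishes. Note that the proposition as stated omits the condition $\alpha_{\mathrm{ELBO}}>0$, which must be added, as the main-text version does.

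\textbf{Main obstacle.} The delicate part is the consistency of Step 2 with Step 3. The reverse-KL force is computed assuming $b\approx 0$, whereas the forward-matching prior actually sits at $b=\alpha\bar f$. Keeping the $b$ term would change the force to $\gamma\alpha\,\mathbb E[\sum_j f_j(f_j-\bar f_j)/\sigma^2_{\phi,j}]$, which is variance-like rather than second-moment-like. I will handle this by absorbing it into the definition of $c$ and $\sigma^2_{\mathrm{eff}}$, requiring only that the resulting coefficient is positive, which holds when the fluctuation of $f$ given $\mathbf x$ is nondegenerate. I will also state explicitly that $\kappa$ is treated as $\gamma$-independent, which the local linear-response assumption licenses.
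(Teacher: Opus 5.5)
Your Steps 1--4 reproduce the paper's proof, which likewise writes the ELBO-only force as $c_1(\alpha_{\mathrm{ELBO}}-\alpha)$, obtains the reverse-KL force $-c_2\gamma\alpha/\sigma_{\mathrm{eff}}^2$ from the centering clause $\mu_\theta\approx\mu^{(x)}$ of Assumption~\ref{assump:linear_response} (your $b\approx 0$), sets $c=c_2/c_1$, and transfers to the prior via Proposition~\ref{prop:fwd_insufficient}; your explicit $c=\mathbb E\|f\|^2/\kappa$ is a concrete instance of this, and you are right that the strict inequality also needs $\alpha_{\mathrm{ELBO}}>0$, as the main-text version states. The $b\approx 0$ versus $b=\alpha\bar f$ tension you flag is real, but the paper settles it purely by assumption, so you need not absorb the variance-like correction; if you do, note that nondegenerate conditional fluctuation of $f$ does not by itself make $\mathbb E\big[\sum_j f_j(f_j-\bar f_j)/\sigma^2_{\phi,j}\big]$ positive once $\sigma^2_{\phi,j}$ depends on $\mathbf y$ (it reduces to a conditional variance only when the posterior variances are $\mathbf y$-independent).
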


\begin{proof}
Under Assumption~\ref{assump:linear_response}, the net ELBO-only gradient force along the leakage coordinate is locally linear near the stationary point. Denoting the local curvature constant by $c_1>0$, we write
\[
G_{\mathrm{ELBO}}(\alpha)
\approx
c_1(\alpha_{\mathrm{ELBO}}-\alpha),
\]
whose unique zero is $\alpha_{\mathrm{ELBO}}$, the local equilibrium under one-sided ELBO training.

Next, substitute the decomposition
\[
\mu_\phi(\mathbf x,\mathbf y)
=
\mu^{(x)}(\mathbf x)+\alpha f(\mathbf x,\mathbf y)
\]
into the reverse-KL mean gradient from Proposition~\ref{prop:reverse_restoring}. For each latent dimension $j$,
\[
\frac{\partial \mathcal L_{\mathrm{rev}}}{\partial \mu_{\phi,j}}
=
\frac{\mu_{\phi,j}-\mu_{\theta,j}}{\sigma_{\phi,j}^2}.
\]
Under the local centering assumption $\mu_\theta(\mathbf x)\approx \mu^{(x)}(\mathbf x)$, the mismatch term becomes first-order proportional to $\alpha f_j(\mathbf x,\mathbf y)$. Projecting this gradient onto the leakage coordinate, averaging over latent dimensions and data samples, and absorbing the local geometric and averaging factors into a constant $c_2>0$, the reverse KL contributes a restoring force of the form
\[
G_{\mathrm{rev}}(\alpha)
=
-\frac{c_2\gamma}{\sigma_{\mathrm{eff}}^2}\alpha.
\]

Under AMVL, both forces act simultaneously, so the local equilibrium $\alpha_{\mathrm{AMVL}}$ satisfies
\[
G_{\mathrm{ELBO}}(\alpha)+G_{\mathrm{rev}}(\alpha)=0.
\]
Substituting the two expressions gives
\[
c_1(\alpha_{\mathrm{ELBO}}-\alpha)
-
\frac{c_2\gamma}{\sigma_{\mathrm{eff}}^2}\alpha
=
0,
\]
which implies
\[
c_1\alpha_{\mathrm{ELBO}}
=
\alpha\left(c_1+\frac{c_2\gamma}{\sigma_{\mathrm{eff}}^2}\right).
\]
Solving for $\alpha$ and defining $c:=c_2/c_1>0$ yields
\[
\alpha_{\mathrm{AMVL}}
=
\frac{\alpha_{\mathrm{ELBO}}}{1+c\gamma/\sigma_{\mathrm{eff}}^2},
\]
which is Eq.~\eqref{eq:alpha_shrinkage}.

Monotonicity in $\gamma$ follows by differentiation:
\[
\frac{\partial \alpha_{\mathrm{AMVL}}}{\partial \gamma}
=
-
\frac{c\,\alpha_{\mathrm{ELBO}}/\sigma_{\mathrm{eff}}^2}
{(1+c\gamma/\sigma_{\mathrm{eff}}^2)^2}
<0.
\]

Under exact forward matching at the function optimum, Proposition~\ref{prop:fwd_insufficient} implies that the prior mean matches the conditional average of the regularized posterior mean:
\[
\mu_\theta^{\mathrm{AMVL}}(\mathbf x)
=
\mathbb E_{\mathbf y\mid \mathbf x}[\mu_\phi(\mathbf x,\mathbf y)].
\]
Using Assumption~\ref{assump:linear_leakage},
\[
\mu_\theta^{\mathrm{AMVL}}(\mathbf x)
=
\mu^{(x)}(\mathbf x)
+
\alpha_{\mathrm{AMVL}}\,
\mathbb E_{\mathbf y\mid \mathbf x}[f(\mathbf x,\mathbf y)]
=
\mu^{(x)}(\mathbf x)+\alpha_{\mathrm{AMVL}}\bar f(\mathbf x),
\]
which gives Eq.~\eqref{eq:amvl_contamination}. The ELBO counterpart in Eq.~\eqref{eq:elbo_contamination} follows identically with $\alpha_{\mathrm{ELBO}}$ in place of $\alpha_{\mathrm{AMVL}}$.

Finally, for $\gamma>0$,
\[
\frac{1}{1+c\gamma/\sigma_{\mathrm{eff}}^2}<1.
\]
Thus, if $\|\bar f(\mathbf x)\|_2>0$, then
\[
\Delta_{\mathrm{AMVL}}(\mathbf x)
=
\frac{\alpha_{\mathrm{ELBO}}\|\bar f(\mathbf x)\|_2}{1+c\gamma/\sigma_{\mathrm{eff}}^2}
<
\alpha_{\mathrm{ELBO}}\|\bar f(\mathbf x)\|_2
=
\Delta_{\mathrm{ELBO}}(\mathbf x),
\]
proving Eq.~\eqref{eq:strict_improvement_local}. Equality occurs only when $\gamma=0$ or $\|\bar f(\mathbf x)\|_2=0$.
\end{proof}

Proposition~\ref{prop:amvl_shrinkage} formalizes the key local mechanism of AMVL: reverse KL reduces the equilibrium answer-dependent posterior shift, and the forward alignment term then transfers this leakage-reduced posterior signal to the prior. Note that if $\bar f(\mathbf x)=0$, the prior mean may exhibit no average contamination even though answer-dependent posterior variability remains; the present result concerns mean-level prior contamination induced by nonzero average leakage directions.

\begin{corollary}[Monotonic improvement with stronger reverse regularization]
\label{cor:gamma_monotone}
Under the assumptions of Proposition~\ref{prop:amvl_shrinkage},
\begin{equation}
\Delta_{\mathrm{AMVL}}(\mathbf x)
=
\frac{\alpha_{\mathrm{ELBO}} \|\bar f(\mathbf x)\|_2}{1+c\gamma/\sigma_{\mathrm{eff}}^2}
\end{equation}
is strictly decreasing in $\gamma$ whenever $\|\bar f(\mathbf x)\|_2>0$.
\end{corollary}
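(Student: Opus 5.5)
The corollary follows directly from Proposition~\ref{prop:amvl_shrinkage}. I will substitute the closed form of $\alpha_{\mathrm{AMVL}}$ from Eq.~\eqref{eq:alpha_shrinkage} into the contamination identity Eq.~\eqref{eq:amvl_contamination}. This yields
\[
\Delta_{\mathrm{AMVL}}(\mathbf x)=\frac{\alpha_{\mathrm{ELBO}}\|\bar f(\mathbf x)\|_2}{1+c\gamma/\sigma_{\mathrm{eff}}^2},
\]
which is the displayed expression.

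Next I will treat this as a function of $\gamma \ge 0$, holding $c$, $\sigma_{\mathrm{eff}}^2$, $\alpha_{\mathrm{ELBO}}$ and $\bar f(\mathbf x)$ fixed. Under Assumption~\ref{assump:linear_response} these are local constants independent of $\gamma$. Differentiating gives
\[
\frac{\partial \Delta_{\mathrm{AMVL}}}{\partial\gamma}=-\frac{c\,\alpha_{\mathrm{ELBO}}\|\bar f(\mathbf x)\|_2/\sigma_{\mathrm{eff}}^2}{(1+c\gamma/\sigma_{\mathrm{eff}}^2)^2}.
\]
Alternatively, and more elementarily, I can note that $\gamma\mapsto 1+c\gamma/\sigma_{\mathrm{eff}}^2$ is strictly increasing and positive, because $c>0$ and $\sigma_{\mathrm{eff}}^2>0$. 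Its reciprocal is therefore strictly decreasing.

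Multiplying a strictly decreasing positive function by the nonnegative constant $\alpha_{\mathrm{ELBO}}\|\bar f(\mathbf x)\|_2$ preserves strict monotonicity exactly when that constant is positive. The main obstacle is this point. The statement only assumes $\|\bar f(\mathbf x)\|_2>0$, but $\alpha_{\mathrm{ELBO}}\ge 0$ could be zero, in which case $\Delta_{\mathrm{AMVL}}\equiv 0$ and is only weakly decreasing. I will handle this by invoking the standing hypothesis $\alpha_{\mathrm{ELBO}}>0$ from the main-text version, Proposition~\ref{prop:amvl_shrinkage_main}, which is the nondegenerate leakage regime the corollary concerns. In the degenerate case $\alpha_{\mathrm{ELBO}}=0$, the contamination vanishes identically and there is nothing to improve. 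With that caveat recorded, strict decrease follows. This is consistent with the monotonicity of $\alpha_{\mathrm{AMVL}}$ already established in the proof of Proposition~\ref{prop:amvl_shrinkage}.
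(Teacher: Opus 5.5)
Your argument is correct and matches the paper's proof, which differentiates the displayed expression in $\gamma$ and reads off the negative sign; your reciprocal-monotonicity route is an equivalent derivative-free version. Your caveat is also right: strictness needs $\alpha_{\mathrm{ELBO}}>0$, which appears as a hypothesis in Proposition~\ref{prop:amvl_shrinkage_main} but is missing from the appendix statement, and the paper's proof assumes it without saying so when it claims the derivative is negative whenever $\|\bar f(\mathbf x)\|_2>0$.
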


\begin{proof}
Differentiate with respect to $\gamma$:
\[
\frac{\partial \Delta_{\mathrm{AMVL}}}{\partial \gamma}
=
-\frac{c\,\alpha_{\mathrm{ELBO}} \|\bar f(\mathbf x)\|_2/\sigma_{\mathrm{eff}}^2}{(1+c\gamma/\sigma_{\mathrm{eff}}^2)^2}
<0
\qquad
\text{if } \|\bar f(\mathbf x)\|_2>0.
\]
\end{proof}

\begin{corollary}[Stronger benefit under posterior overconfidence]
\label{cor:variance_amplification}
Under the same assumptions, the relative improvement factor
\begin{equation}
\frac{\Delta_{\mathrm{ELBO}}(\mathbf x)}{\Delta_{\mathrm{AMVL}}(\mathbf x)}
=
1+\frac{c\gamma}{\sigma_{\mathrm{eff}}^2}
\end{equation}
increases as $\sigma_{\mathrm{eff}}^2$ decreases.
\end{corollary}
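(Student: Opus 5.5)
The claim in Corollary~\ref{cor:variance_amplification} follows from the closed forms in Proposition~\ref{prop:amvl_shrinkage}; no new estimate is needed. First I will form the ratio of the two contamination expressions, Eq.~\eqref{eq:elbo_contamination} and Eq.~\eqref{eq:amvl_contamination}. Both are proportional to the same factor $\|\bar f(\mathbf x)\|_2$, which I assume strictly positive as in Proposition~\ref{prop:amvl_shrinkage}; I also need $\alpha_{\mathrm{ELBO}}>0$ so the ratio is well defined. The common factor cancels, leaving
\[
\frac{\Delta_{\mathrm{ELBO}}(\mathbf x)}{\Delta_{\mathrm{AMVL}}(\mathbf x)}=\frac{\alpha_{\mathrm{ELBO}}}{\alpha_{\mathrm{AMVL}}}.
\]
Substituting Eq.~\eqref{eq:alpha_shrinkage} then gives exactly $1+c\gamma/\sigma_{\mathrm{eff}}^2$.

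For the monotonicity statement, I will treat the ratio as a function of $s=\sigma_{\mathrm{eff}}^2>0$, with $c>0$ and $\gamma$ held fixed. Differentiating gives $-c\gamma/s^2$. This is strictly negative when $\gamma>0$, so the ratio strictly increases as $\sigma_{\mathrm{eff}}^2$ decreases. When $\gamma=0$ the ratio is identically $1$, so it is only non-decreasing; I will state that edge case explicitly.

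The only delicate point is a modelling one rather than a calculation. The argument requires that $c$ and $\alpha_{\mathrm{ELBO}}$ do not themselves depend on $\sigma_{\mathrm{eff}}^2$ when we vary it. I will handle this by reading the statement under Assumption~\ref{assump:linear_response}: there $\sigma_{\mathrm{eff}}^2$ enters only through the reverse-KL restoring force, and the local curvature constants $c_1,c_2$ (hence $c=c_2/c_1$) together with the ELBO-only equilibrium are fixed. I will note this as the interpretation under which the comparison is made.
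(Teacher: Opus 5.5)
Your proposal is correct and matches the paper's argument, which simply declares the corollary immediate from Proposition~\ref{prop:amvl_shrinkage}: cancel $\|\bar f(\mathbf x)\|_2$, substitute Eq.~\eqref{eq:alpha_shrinkage}, and observe that $1+c\gamma/\sigma_{\mathrm{eff}}^2$ is decreasing in $\sigma_{\mathrm{eff}}^2$. Your explicit caveats ($\alpha_{\mathrm{ELBO}}>0$ and $\|\bar f(\mathbf x)\|_2>0$ for a well-defined ratio, strict increase only when $\gamma>0$, and $c$ held fixed as $\sigma_{\mathrm{eff}}^2$ varies) are conditions the paper leaves implicit; note, though, that $\alpha_{\mathrm{ELBO}}$ cancels from the ratio, so only the independence of $c$ from $\sigma_{\mathrm{eff}}^2$ is actually needed.
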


\begin{proof}
Immediate from Proposition~\ref{prop:amvl_shrinkage}.
\end{proof}

Corollary~\ref{cor:variance_amplification} is practically important: within this local Gaussian analysis, reverse KL is most beneficial exactly when answer leakage is most dangerous, namely when the posterior becomes overly sharp and overconfident.

\subsection{Information-Theoretic Interpretation}
\label{appendix:mi_interpretation}

Answer leakage can also be understood through the conditional mutual information $I_q(\mathbf Z;\mathbf Y \mid \mathbf X)$, extending standard ELBO surgery techniques \cite{hoffman2016elbo, alemi2018fixing} to the conditional setting:
\begin{equation}
I_q(\mathbf Z;\mathbf Y \mid \mathbf X)
=
\mathbb E_{p(\mathbf x,\mathbf y)}
\left[
D_{KL}\big(q_\phi(\mathbf z\mid \mathbf x,\mathbf y)\,\|\,q_\phi(\mathbf z\mid \mathbf x)\big)
\right],
\label{eq:cmi_leakage}
\end{equation}
where
\[
q_\phi(\mathbf z\mid \mathbf x)
=
\int q_\phi(\mathbf z\mid \mathbf x,\mathbf y)\,p(\mathbf y\mid \mathbf x)\,d\mathbf y.
\]

\begin{proposition}[Forward KL decomposition]
\label{prop:forward_kl_decomposition}
For any posterior $q_\phi(\mathbf z\mid \mathbf x,\mathbf y)$ and prior $p_\theta(\mathbf z\mid \mathbf x)$,
\begin{equation}
\mathbb E_{p(\mathbf x,\mathbf y)}
D_{KL}\!\big(q_\phi(\mathbf z\mid \mathbf x,\mathbf y)\,\|\,p_\theta(\mathbf z\mid \mathbf x)\big)
=
I_q(\mathbf Z;\mathbf Y\mid \mathbf X)
+
\mathbb E_{p(\mathbf x)}
D_{KL}\!\big(q_\phi(\mathbf z\mid \mathbf x)\,\|\,p_\theta(\mathbf z\mid \mathbf x)\big).
\label{eq:forward_kl_decomposition}
\end{equation}
\end{proposition}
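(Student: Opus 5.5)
The plan is to prove the decomposition by adding and subtracting $\log q_\phi(\mathbf z\mid \mathbf x)$ inside the forward KL integrand, where $q_\phi(\mathbf z\mid\mathbf x)$ is the aggregated posterior. Fixing $(\mathbf x,\mathbf y)$, the logarithm splits as
\[
\log\frac{q_\phi(\mathbf z\mid \mathbf x,\mathbf y)}{p_\theta(\mathbf z\mid \mathbf x)}
=
\log\frac{q_\phi(\mathbf z\mid \mathbf x,\mathbf y)}{q_\phi(\mathbf z\mid \mathbf x)}
+
\log\frac{q_\phi(\mathbf z\mid \mathbf x)}{p_\theta(\mathbf z\mid \mathbf x)}.
\]
I then take the expectation over $\mathbf z\sim q_\phi(\mathbf z\mid\mathbf x,\mathbf y)$ and over $(\mathbf x,\mathbf y)\sim p(\mathbf x,\mathbf y)$.

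The first term, averaged this way, is exactly $\mathbb E_{p(\mathbf x,\mathbf y)} D_{KL}(q_\phi(\mathbf z\mid\mathbf x,\mathbf y)\|q_\phi(\mathbf z\mid\mathbf x))$. By the definition in \eqref{eq:cmi_leakage}, this is $I_q(\mathbf Z;\mathbf Y\mid\mathbf X)$.

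The second term is the key step. Its integrand $\log(q_\phi(\mathbf z\mid\mathbf x)/p_\theta(\mathbf z\mid\mathbf x))$ does not depend on $\mathbf y$. Writing $p(\mathbf x,\mathbf y)=p(\mathbf x)p(\mathbf y\mid\mathbf x)$ and applying Fubini/Tonelli, I integrate over $\mathbf y$ first:
\[
\int q_\phi(\mathbf z\mid\mathbf x,\mathbf y)\,p(\mathbf y\mid\mathbf x)\,d\mathbf y = q_\phi(\mathbf z\mid\mathbf x).
\]
This turns the second term into $\mathbb E_{p(\mathbf x)}\int q_\phi(\mathbf z\mid\mathbf x)\log\frac{q_\phi(\mathbf z\mid\mathbf x)}{p_\theta(\mathbf z\mid\mathbf x)}\,d\mathbf z = \mathbb E_{p(\mathbf x)} D_{KL}(q_\phi(\mathbf z\mid\mathbf x)\|p_\theta(\mathbf z\mid\mathbf x))$. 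Adding the two terms gives \eqref{eq:forward_kl_decomposition}.

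The only real obstacle is justifying the splitting of the expectation, which requires each piece to be finite or at least not of the form $\infty-\infty$. I would handle this by stating a mild integrability assumption: the left-hand side is finite and $q_\phi(\cdot\mid\mathbf x,\mathbf y)\ll q_\phi(\cdot\mid\mathbf x)\ll p_\theta(\cdot\mid\mathbf x)$. Both pieces are KL divergences and hence nonnegative, so linearity then applies. If the left-hand side is infinite, the identity holds as $\infty=\infty$. In the diagonal Gaussian setting used in the paper, all densities are positive everywhere and the second moments are finite, so these conditions hold automatically whenever $p(\mathbf y\mid\mathbf x)$ gives a posterior mean with finite second moment.
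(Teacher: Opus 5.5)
Your proof is correct and is essentially the paper's own argument. Both insert the aggregated posterior $q_\phi(\mathbf z\mid\mathbf x)$ into the log-ratio, identify the first piece with $I_q(\mathbf Z;\mathbf Y\mid\mathbf X)$, and average the second over $p(\mathbf x,\mathbf y)$; your explicit marginalization $\int q_\phi(\mathbf z\mid\mathbf x,\mathbf y)\,p(\mathbf y\mid\mathbf x)\,d\mathbf y=q_\phi(\mathbf z\mid\mathbf x)$ spells out what the paper compresses into ``taking expectation.'' Your integrability caveats go beyond the paper and are sound, with two refinements: the finiteness assumption can be dropped, because both pieces have negative parts of expectation at most $1/e$ (since $t\log t\ge -1/e$), so the split is valid in $[0,\infty]$; and your closing Gaussian remark would also need the posterior variances and log-variances to be integrable in $\mathbf y$, not only a square-integrable posterior mean.
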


\begin{proof}
Expand the KL term:
\[
D_{KL}(q_\phi(\mathbf z\mid \mathbf x,\mathbf y)\|p_\theta(\mathbf z\mid \mathbf x))
=
D_{KL}(q_\phi(\mathbf z\mid \mathbf x,\mathbf y)\|q_\phi(\mathbf z\mid \mathbf x))
+
\mathbb E_{q_\phi}\!\left[\log \frac{q_\phi(\mathbf z\mid \mathbf x)}{p_\theta(\mathbf z\mid \mathbf x)}\right].
\]
Taking expectation over $p(\mathbf x,\mathbf y)$ yields \eqref{eq:forward_kl_decomposition}.
\end{proof}

A large value of $I_q(\mathbf Z;\mathbf Y \mid \mathbf X)$ indicates that the latent variable retains substantial target-specific information even after conditioning on the input, which is precisely the signature of answer leakage. Proposition~\ref{prop:forward_kl_decomposition} shows that the one-sided KL in the ELBO mixes two effects: suppressing target dependence and fitting the prior to the aggregated posterior. In contrast, AMVL additionally constrains the reverse discrepancy $D_{KL}(p_\theta\|q_\phi)$, forcing each posterior $q_\phi(\mathbf z\mid \mathbf x,\mathbf y)$ to remain compatible with prior-reachable high-density latent regions. This suppresses answer-specific posterior collapse onto narrow latent modes that cannot be reliably reached from $\mathbf x$ alone, thereby helping reduce train-inference mismatch.

\paragraph{Summary.}
The standard ELBO performs one-sided prior matching: it encourages the prior to chase a target-aware posterior, even when that posterior exploits answer leakage. AMVL decouples the two roles. The forward KL teaches the prior to approximate useful training-time latent states, while the reverse KL regularizes the posterior to remain compatible with the inference-time prior. Under the Gaussian mean-leakage analysis above, and under the local linear-response model in Assumptions~\ref{assump:linear_leakage}--\ref{assump:linear_response}, bidirectional calibration reduces the contamination transferred into the prior and therefore provides a principled mechanism for mitigating answer leakage more effectively than one-sided ELBO training.

\section{Baselines}
\label{app:baseline}
We compare our method against representative state-of-the-art MLLM baselines from three families: \emph{thinking about images}, \emph{thinking with images}, and \emph{latent reasoning}.

\textbf{Thinking about Images.}
Methods in this category enhance multimodal reasoning by generating explicit chain-of-thought trajectories over visual inputs, typically through reinforcement learning or supervised reasoning traces.
\begin{itemize}
    \item \textbf{Vision-R1 \cite{visionr1}:} Adapts reinforcement learning to multimodal reasoning by encouraging a ``think before answer'' behavior. The model is optimized to produce explicit reasoning trajectories prior to final response generation, thereby improving step-by-step visual reasoning ability.
    \item \textbf{PAPO \cite{papo}:} Extends reinforcement-learning-based multimodal reasoning with an additional perception-oriented objective. Besides learning to generate reasoning traces, PAPO incorporates an implicit perception loss to encourage more faithful image-grounded descriptions and stronger alignment between visual understanding and reasoning.
\end{itemize}

\textbf{Thinking with Images.}
This category augments reasoning by actively manipulating, refining, or querying visual evidence during inference, often through external tools or environment interaction.
\begin{itemize}
    \item \textbf{PixelReasoner \cite{pixelreasoner}:} Introduces a tool-augmented reasoning framework in which the model iteratively edits or enhances the input image during the reasoning process. By interacting with modified visual observations rather than relying solely on the original input, PixelReasoner improves its ability to resolve fine-grained visual ambiguities and difficult perceptual details.
    \item \textbf{DeepEyes \cite{deepeyes}:} Integrates external tool usage directly into a unified reinforcement learning loop, allowing the model to reason through actions such as visual grounding, web search, and code execution. Instead of treating tools as a separate post-processing module, DeepEyes makes tool invocation part of the reasoning policy itself, aiming to more closely mimic human-like visual perception and problem solving.
\end{itemize}

\textbf{Latent Reasoning.}
This line of work explores reasoning directly in the embedding space, bypassing explicit natural-language or pixel-level intermediate decoding and instead using continuous latent representations as internal reasoning states.
\begin{itemize}
    \item \textbf{LVR \cite{lvr} (Latent Visual Reasoning):} Projects visual features into a joint semantic space and performs autoregressive reasoning by reconstructing query-relevant visual tokens, referred to as ``latent visual thoughts,'' interleaved with text generation. This allows the model to reason over continuous visual semantics without fully materializing every intermediate step in language.
    \item \textbf{Mull-Tokens \cite{mull}:} Introduces modality-agnostic latent tokens that act as a multimodal scratchpad. These tokens are trained with interleaved traces and relaxed supervision so that they can flexibly store intermediate visual or textual information and optimize latent trajectories toward the final answer.
    \item \textbf{Monet \cite{monet}:} Enables reasoning directly in latent visual space by treating continuous embeddings as intermediate visual thoughts. It combines a distillation-based supervised fine-tuning pipeline with Visual-latent Policy Optimization (VLPO), explicitly incorporating latent embeddings into policy-gradient-based optimization to improve generalization on abstract visual reasoning tasks.
\end{itemize}

\section{Implementation Details}
\label{appendix:experimental_details}

\paragraph{Base model and processor.}
All experiments are built upon Qwen2.5-VL-7B-Instruct as the underlying multimodal large language model. We use the corresponding official processor and tokenizer for text-image formatting and multimodal input construction. During training, we extend the tokenizer with a set of latent-specific special tokens, including a latent start token, a latent end token, and a sequence of latent placeholder tokens used to reserve latent slots in the autoregressive input stream.

\paragraph{Training data.}
\label{app:data_mixture}

To construct a robust and diverse environment for continuous latent reasoning, we train AMVL on a comprehensive mixture of multimodal reasoning datasets. This mixture is explicitly curated to cover a wide spectrum of cognitive tasks, from fine-grained visual grounding to complex multi-step logical deduction, ensuring the learned latent space generalizes across varied reasoning paradigms. The composition includes:

\begin{itemize}
    \item \textbf{Visual-CoT~\cite{Visualcot}:} This dataset provides a foundational corpus for step-by-step multimodal reasoning. It trains the model to decompose complex visual questions into intermediate logical steps, bridging high-level semantic queries with observable visual evidence.
    
    \item \textbf{ReFocus~\cite{ReFocus}:} To ensure the latent reasoning space captures precise spatial and region-level information, we include ReFocus. This dataset emphasizes grounded visual reasoning, requiring the model to maintain focus on fine-grained visual details and specific regions of interest throughout the inference process.
    
    \item \textbf{CogCoM~\cite{CogCoM}:} Focusing on complex cognitive trajectories, CogCoM enhances the model's ability to perform multi-hop deductive reasoning. It provides intricate scenarios where the model must synthesize multiple pieces of visual and textual information to arrive at a valid conclusion.
    
    \item \textbf{Zebra-CoT~\cite{Zebra-CoT}:} This dataset is utilized to strengthen relational and compositional reasoning capabilities. It forces the model to track intricate relationships between multiple objects or concepts within a visual scene, discouraging reliance on superficial data correlations.
\end{itemize}
All training samples across these diverse sources are standardized into a unified chat-style multimodal format. During preprocessing, any sample lacking a valid and complete assistant response is strictly filtered out to maintain high supervision quality. For every valid sample, we programmatically insert a dedicated latent token block immediately following the assistant's prefix. Training the main AMVL model is conducted on 16 NVIDIA A100 GPUs and takes approximately 20 hours.

\paragraph{Latent Block Construction and Insertion.}
For every valid sample, we programmatically insert a dedicated latent token block immediately following the assistant's prefix. Given a predefined number of latent slots $k$, this block allocates $k$ dedicated latent placeholder tokens enclosed by specific start and end markers. In our implementation, the latent token block takes the form:
\[
\texttt{<abs\_token>} \;+\; \texttt{latent\_pad}_1,\ldots,\texttt{latent\_pad}_k \;+\; \texttt{</abs\_token>}.
\]
These placeholders serve solely as structural anchors for latent injection and are explicitly excluded from standard language modeling supervision. The start and end markers preserve the latent span boundaries, while the interior placeholder embeddings are replaced by the inferred continuous latent features during both training and inference. This structural modification is critical: it explicitly allocates the continuous capacity required for our dual-KL optimization. By forcing the model to process this latent block before generating any discrete textual output, we ensure that the core abstract and spatial reasoning fluidly evolves within the high-density continuous space prior to textual serialization.

\paragraph{Variational parameterization and latent settings.}
Both the prior and posterior are parameterized as factorized diagonal Gaussian distributions over the latent slots. The latent dimension is set to $d=512$ by default, and the number of latent slots is set to $k=8$ unless otherwise specified in the ablation studies. The variational module predicts the mean and log-variance of each latent slot, and latent samples are drawn using the standard reparameterization trick. In implementation, we predict log-variance rather than variance directly for improved numerical stability.

\paragraph{Input construction and loss masking.}
Training uses teacher-forced autoregressive decoding. The language modeling loss is applied only to the answer tokens following the assistant prefix, while the latent start token, latent end token, and all latent placeholder tokens are masked out from next-token supervision. The latent-token mask is maintained separately so that the model can identify the positions where latent variables should be injected. This design ensures that the model is not trained to predict the fixed latent placeholders themselves, but instead uses them as containers for continuous latent reasoning states.

\paragraph{Optimization details.}
We train the model with AdamW using fused PyTorch implementation, bf16 mixed precision, gradient checkpointing, cosine learning-rate scheduling, and a warmup ratio of $0.05$. The vision encoder is frozen throughout training, while the language backbone and variational modules are jointly optimized. We also enable TF32 matrix multiplication where available for training acceleration. The exact batch size, gradient accumulation steps, learning rate, and training epochs follow the settings reported in the main experiments.

\paragraph{KL scheduling.}
We apply different schedules to the forward and reverse KL terms for stable optimization. The forward KL weight is linearly warmed up from $0$ to $1.0$ over the first 2000 training steps:
\[
\beta_t = \min\left(1, \frac{t}{2000}\right).
\]
For the reverse KL term, we use a delayed annealing strategy. Its weight is fixed to $0$ during the first 1000 steps, and then linearly increased to $0.5$ over the following 2000 steps:
\[
\gamma_t =
\begin{cases}
0, & t < 1000,\\
0.5 \cdot \min\left(1, \frac{t-1000}{2000}\right), & t \ge 1000.
\end{cases}
\]
This schedule allows the prior to become partially calibrated before reverse-side posterior regularization becomes active, which improves training stability in practice.

\paragraph{Embedding initialization and freezing strategy.}
After extending the tokenizer with latent-related special tokens, we resize the model vocabulary accordingly. Newly added token embeddings are initialized using the mean of the original embedding matrix. During the initial embedding adaptation stage, gradient updates to the embedding matrix are restricted to the newly added tokens, preventing unnecessary drift in the pretrained vocabulary. In the main training stage, the language backbone is jointly optimized with the variational components, while the visual encoder remains frozen.

\begin{table}[t]
\caption{Performance on the out-of-distribution VisualPuzzles benchmark.
}
\label{tab:ood}
\centering
\resizebox{0.99\textwidth}{!}{
\begin{tabular}{lccccccc}
\toprule
Method & Overall & Algorithmic &Analogical& Deductive &Inductive &Spatial \\
\midrule
Qwen2.5-VL-7B & 32.71& 37.02& 21.80 &47.50 &26.32& 21.80 \\
Pangea-7B & 31.30& 32.40& 23.70& 38.50& 28.70& 32.50\\
Deepeyes &  32.96 &37.79& 27.01 &41.00& 26.79& 27.01 \\
LVR &27.74&28.63&23.22&36.00&28.23&24.12 \\
LLaVA-OneVision-72B& 30.80& 34.70& 26.50& 37.00& 27.30& 28.70\\
\midrule
Ours-7B & 33.90&32.44&27.96&52.50&28.23&30.77\\
\bottomrule
\end{tabular}
}
\vspace{-7pt}
\end{table}

\begin{figure*}[ht]
    \centering
    \includegraphics[width=0.95\textwidth]{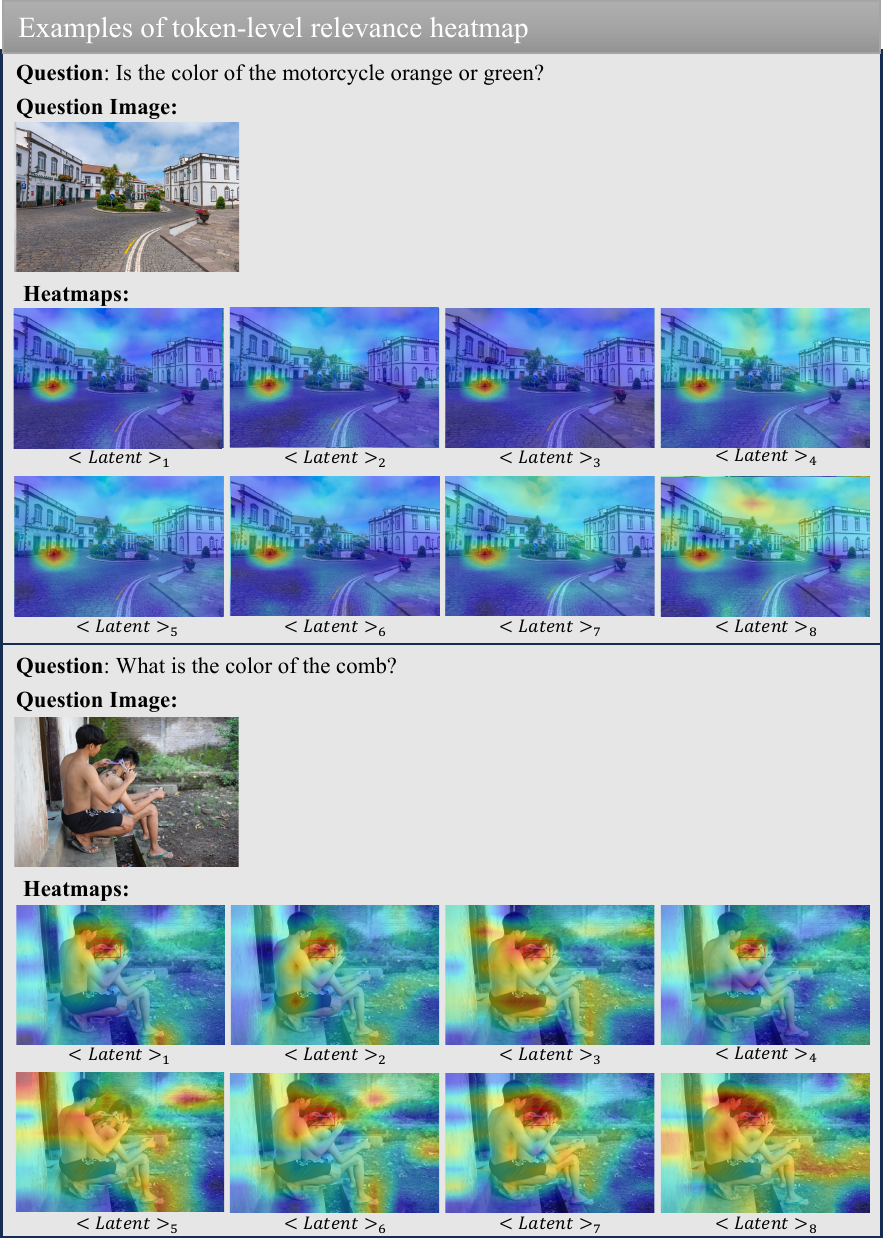}
    \caption{
    Token-level relevance heatmaps.
    }
    \label{fig:heatmap}
\end{figure*}

\begin{figure*}[ht]
    \centering
    \includegraphics[width=0.99\textwidth]{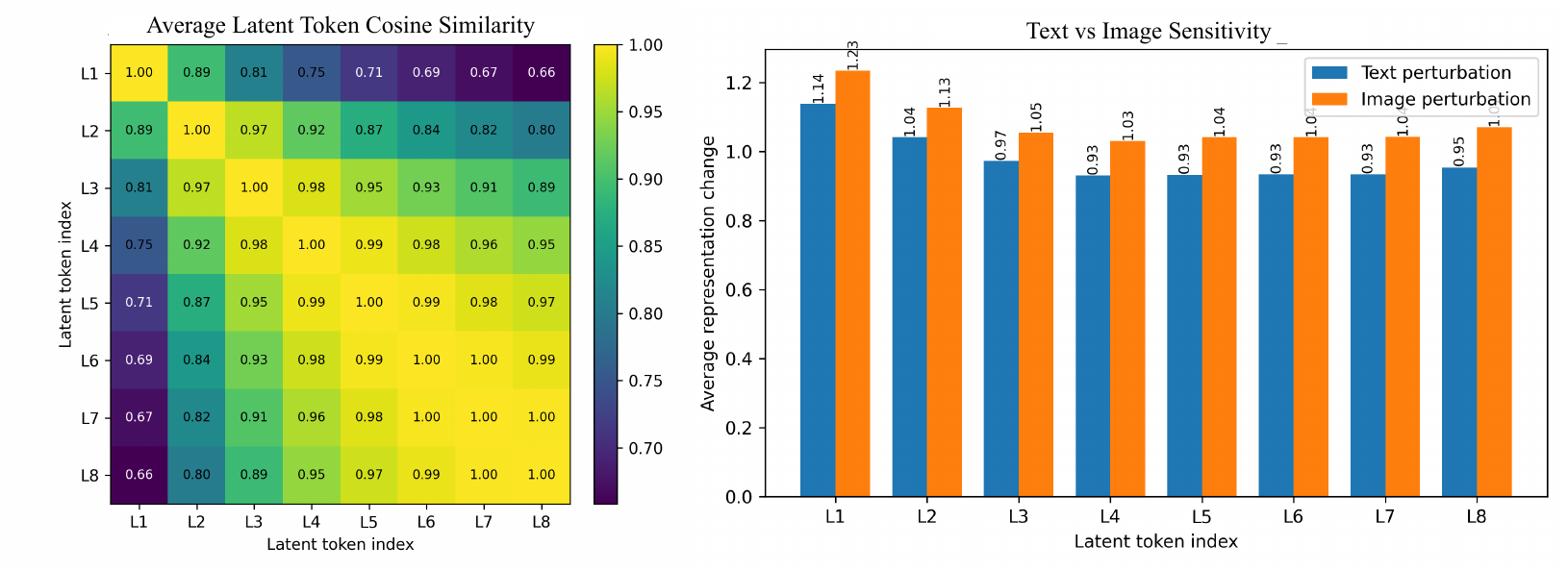}
    \caption{
    Quantitative analysis of latent token properties. 
    \textbf{Left:} Average cosine similarity matrix across latent tokens. 
    \textbf{Right:} Sensitivity of latent representations to image and text permutations.
    }
    \label{fig:sensiti}
\end{figure*}

\section{Out-of-Distribution Generalization on VisualPuzzles}
\label{appendix:ood_visualpuzzles}

To further evaluate whether the learned latent reasoning space generalizes beyond the in-distribution benchmarks used in the main paper, we test our model on VisualPuzzles, an out-of-distribution multimodal reasoning benchmark designed to assess more abstract reasoning skills. VisualPuzzles contains diverse reasoning categories, including algorithmic, analogical, deductive, inductive, and spatial reasoning, and therefore provides a useful testbed for evaluating whether the learned latent representations support transferable reasoning rather than benchmark-specific pattern matching.

Table~\ref{tab:ood} reports the results. Our 7B model achieves the best overall performance (33.90), outperforming all compared baselines, including Qwen2.5-VL-7B~\cite{qwen25}, Deepeyes~\cite{deepeyes}, Pangea-7B~\cite{pangea}, LVR~\cite{lvr}, and LLaVA-OneVision-72B~\cite{Llava-onevision}. Notably, our model obtains the strongest score on deductive reasoning (52.50), with additional gains on analogical and spatial reasoning compared with most competing models. These improvements suggest that the learned latent reasoning states are not merely improving in-domain answer generation, but also provide a more transferable reasoning substrate under distribution shift.

This result complements the main benchmark findings. While the in-distribution results show that AMVL improves multimodal reasoning performance on standard evaluation sets, the VisualPuzzles experiment further indicates that the learned latent space retains useful abstract structure under out-of-distribution conditions. This is consistent with our broader claim that improving train-inference compatibility in latent reasoning can lead not only to better in-domain decoding, but also to stronger reasoning generalization.

\section{Semantic Properties of the Latent Reasoning Space}
\label{app:additional_analysis}

To better understand the learned latent reasoning tokens, we conduct qualitative and quantitative probing analyses, shown in Figure~\ref{fig:heatmap} and Figure~\ref{fig:sensiti}.

\paragraph{Visual grounding.}
Figure~\ref{fig:heatmap} visualizes token-level relevance heatmaps obtained via occlusion-based sensitivity analysis. For each latent token $L_i$, we mask image patches and measure the resulting representation shift, where warmer colors indicate higher sensitivity. Across different queries, the latent tokens consistently respond to task-relevant visual regions rather than diffuse global context. For example, when the query asks about the color of a motorcycle or a comb, the strongest responses localize around the queried objects and their nearby boundaries. The tokens also exhibit diverse but overlapping relevance patterns: some capture broader context, while others focus more sharply on local details. This suggests that the latent block forms a sequence of progressively refined visual abstractions, rather than collapsing into redundant slots. More importantly, the strong localization to query-relevant regions is consistent with our main claim that continuous latent reasoning can preserve fine-grained perceptual grounding without forcing intermediate reasoning into discrete language tokens.

\paragraph{Latent geometry and sensitivity.}
Figure~\ref{fig:sensiti} provides complementary quantitative evidence. The token-wise cosine similarity matrix (left) shows a banded structure: adjacent latent tokens are more similar, while distant tokens (e.g., $L_1$ and $L_8$) are less aligned. This pattern suggests a smooth but non-collapsed latent trajectory. The perturbation analysis (right) further shows that latent representations are more sensitive to image permutations than to text permutations. This indicates that the learned latent space is strongly grounded in visual input, while still remaining conditioned on the textual query. Taken together, these results suggest that AMVL's latent reasoning space is strongly grounded in visual evidence, while still being shaped by the textual query. This is consistent with our broader motivation of alleviating the language-space bottleneck in multimodal reasoning.

\begin{table}[t]
\centering
\caption{Latent-space spread and prior--posterior alignment statistics. Lower spread and paired L2 indicate more compact and better-aligned latent geometry, while higher cosine indicates stronger directional consistency.}
\label{tab:latent_spread_stats}
\resizebox{0.99\textwidth}{!}{
\begin{tabular}{lccccc}
\toprule
Method & Prior Spread $S_p \downarrow$ & Posterior Spread $S_q \downarrow$ & Paired L2 $D_{\text{L2}} \downarrow$ & Cosine $S_{\text{cos}} \uparrow$ & Mean Shift $D_{\text{shift}} \downarrow$ \\
\midrule
NTP  & 1.9243 & 2.2326 & 16.9488 & -0.0190 & 16.6418 \\
NTP + Rev-KL & 0.4453 & 0.5177 & 3.6960  & 0.9594  & 3.6294  \\
NTP + Fwd-KL & 0.6359 & 0.8707 & 5.6528  & 0.8697  & 5.5484  \\
AMVL & 0.6786 & 0.8257 & 5.4317  & 0.8883  & 5.3240  \\
\bottomrule
\end{tabular}
}
\end{table}

\section{Latent Spread Analysis}
\label{appendix:latent_spread}

To better understand how different training objectives shape the latent reasoning space, we analyze the dispersion of prior and posterior latent means under the four main training variants from our ablation study (Table~\ref{tab:loss_ablation}): NTP, NTP + Rev-KL, NTP + Fwd-KL, and AMVL. Here, NTP denotes the baseline trained only with next-token prediction, without explicit latent regularization. NTP + Fwd-KL denotes the forward-KL alignment variant, while NTP + Rev-KL denotes the reverse-KL regularized variant. AMVL denotes the full bidirectional objective. This analysis complements the main ablation study by examining how each objective shapes the geometry of the learned latent space.

For each validation example $n \in \{1, \dots, N\}$, we extract the prior mean $\mu_p^{(n)} \in \mathbb{R}^{k \times d}$ and posterior mean $\mu_q^{(n)} \in \mathbb{R}^{k \times d}$ from the trained model, where $k$ is the number of latent slots and $d$ is the latent dimension. To obtain a sample-level representation, we apply slot-wise mean pooling:
\[
\bar{\mu}_p^{(n)} = \frac{1}{k}\sum_{i=1}^{k}\mu_{p,i}^{(n)}, \qquad \bar{\mu}_q^{(n)} = \frac{1}{k}\sum_{i=1}^{k}\mu_{q,i}^{(n)}.
\]
For each branch (prior or posterior), we compute the global center across the validation set, denoted as $c_p$ and $c_q$:
\[
c_p = \frac{1}{N}\sum_{n=1}^{N}\bar{\mu}_p^{(n)}, \qquad c_q = \frac{1}{N}\sum_{n=1}^{N}\bar{\mu}_q^{(n)}.
\]

We define the average latent spread for the prior ($S_p$) and posterior ($S_q$) as the mean Euclidean distance from each sample to its respective global center:
\[
S_p = \frac{1}{N}\sum_{n=1}^{N} \left\|\bar{\mu}_p^{(n)} - c_p\right\|_2, \qquad S_q = \frac{1}{N}\sum_{n=1}^{N} \left\|\bar{\mu}_q^{(n)} - c_q\right\|_2.
\]
Notably, these spread statistics quantify the dispersion of sample-level pooled latent means across the dataset, rather than the covariance or support width of each individual latent distribution.

\textbf{Geometric Interpretation:} $S_p$ and $S_q$ quantify the global concentration of sample-level latent representations across the dataset. Lower spread indicates that the pooled latent means are more tightly clustered around the global center, whereas higher spread suggests stronger cross-sample drift.

To comprehensively quantify the alignment between the prior and posterior branches, we formulate three paired-geometry metrics, each capturing a distinct aspect of the train-inference mismatch:
\[
D_{\text{L2}} = \frac{1}{N}\sum_{n=1}^{N} \left\|\bar{\mu}_p^{(n)} - \bar{\mu}_q^{(n)}\right\|_2, \quad
S_{\text{cos}} = \frac{1}{N}\sum_{n=1}^{N} \frac{\langle \bar{\mu}_p^{(n)}, \bar{\mu}_q^{(n)} \rangle}{\|\bar{\mu}_p^{(n)}\|_2 \|\bar{\mu}_q^{(n)}\|_2}, \quad
D_{\text{shift}} = \left\|c_p - c_q\right\|_2.
\]
\begin{itemize}
    \item \textbf{Paired L2 Distance ($D_{\text{L2}}$)} measures the \emph{absolute instance-level error}. It evaluates how far apart the target-agnostic prior and the target-aware posterior are for the exact same input sample.
    \item \textbf{Cosine Similarity ($S_{\text{cos}}$)} measures the \emph{directional consistency}. Regardless of magnitude, it evaluates whether the prior and posterior point toward the same semantic region in the high-dimensional space.
    \item \textbf{Mean Shift ($D_{\text{shift}}$)} measures the \emph{systematic global bias}. It captures the macro-level translation between the entire prior distribution and the posterior distribution, indicating overall domain drift.
\end{itemize}

Table~\ref{tab:latent_spread_stats} summarizes these statistics. Several patterns are broadly consistent with the benchmark trends reported in the main text.

\paragraph{NTP leads to severe latent mismatch and the weakest overall reasoning behavior.}
Without explicit latent regularization, NTP produces the most dispersed latent geometry for both branches ($S_p = 1.9243, S_q = 2.2326$), together with the worst prior--posterior alignment ($D_{\text{L2}} = 16.9488, S_{\text{cos}} = -0.0190, D_{\text{shift}} = 16.6418$). This suggests that the posterior drifts freely toward target-dependent latent regions while the prior remains poorly calibrated for inference-time usage. Such severe train--inference mismatch is consistent with the weaker benchmark performance of NTP-only observed in the ablation study.

\paragraph{NTP + Rev-KL produces the most concentrated sample-level latent geometry, but not the strongest downstream performance.}
Among all variants, NTP + Rev-KL yields the smallest sample-level spread and the strongest prior--posterior alignment under the considered metrics ($S_p = 0.4453, S_q = 0.5177, D_{\text{L2}} = 3.6960, S_{\text{cos}} = 0.9594$). This suggests that reverse-side regularization is effective at reducing cross-sample drift of latent means and improving compatibility with the learned prior. Importantly, this concentration is measured at the level of pooled latent means across validation examples, and does not imply that each individual posterior becomes sharper. In fact, reverse KL can still encourage broader support within a sample while making the sample-level latent geometry more globally concentrated. However, NTP + Rev-KL does not achieve the best downstream performance, implying that stronger geometric concentration alone is not sufficient: overly strong posterior regularization may reduce useful target-conditioned variability needed for reasoning.

\paragraph{NTP + Fwd-KL substantially improves over NTP, but still exhibits an imbalanced latent geometry.}
The forward-KL variant (NTP + Fwd-KL) resolves most of the catastrophic mismatch observed under NTP, improving both paired distance and cosine alignment by a large margin. At the same time, the prior branch remains more concentrated across examples ($S_p = 0.6359$), while the posterior branch is still more dispersed across examples ($S_q = 0.8707$), yielding a relatively large prior--posterior spread gap ($\Delta S = S_q - S_p = 0.2348$). This pattern is consistent with a one-sided calibration regime in which the prior is trained to chase the posterior, while the posterior itself remains more weakly constrained and therefore relatively target-dependent. Correspondingly, NTP + Fwd-KL improves benchmark accuracy over NTP, but remains below the full AMVL objective.

\paragraph{AMVL yields a more favorable trade-off between prior expressiveness and posterior regularization.}
Compared with NTP + Fwd-KL, AMVL slightly increases the spread of sample-level prior means ($S_p = 0.6786$) while slightly reducing the spread of sample-level posterior means ($S_q = 0.8257$), thereby significantly shrinking the prior--posterior spread gap ($\Delta S = 0.1471$). It also improves paired L2 distance, cosine similarity, and mean shift over the forward-only baseline. Importantly, although AMVL does not produce the most concentrated latent geometry overall, it achieves the strongest downstream benchmark performance. This suggests that the most effective latent reasoning space is not the one with the tightest global concentration, but the one that best balances inference-time prior expressiveness with training-time posterior regularization. This is consistent with the main design of AMVL, which combines prior alignment with posterior control to improve train-inference compatibility.

\section{Additional Ablation Studies}
\label{app:more_ablation}
In this section, we provide extended empirical analyses to further validate the structural design, optimization stability, and inference robustness of our AMVL framework. Specifically, we investigate: (1) the impact of different \textbf{variational head architectures}, confirming the necessity and efficiency of our lightweight LLM-native design; (2) the \textbf{sensitivity to loss weights}, demonstrating that AMVL's dual-KL objective remains robust across a diverse range of hyperparameter settings; and (3) the \textbf{robustness to inference-time latent sampling} under various temperatures, which empirically verifies the smoothness and stability of the learned continuous latent space.

\begin{table}[t]
\caption{Ablation study of variational head architectures. We compare a linear head, a standard MLP head, our LLM-native head, and a deeper variant of the same design. 
}
\label{tab:head_ablation}
\centering
\begin{tabular}{lccc}
\toprule
Method & V$^*$ & HRBench4K & HRBench8K \\
\midrule
Linear head & 76.96 & 70.00 & 66.25 \\
MLP head & 75.92 & 69.88 & 65.50 \\
LLM-native head & 84.29 & 72.12 & 68.50 \\
Deeper LLM-native head & 75.39 & 70.25 & 65.50 \\
\bottomrule
\end{tabular}
\vspace{-7pt}
\end{table}

\paragraph{Effect of the variational head architecture.}
Table~\ref{tab:head_ablation} compares different variational head architectures, including a linear head, a standard MLP head, our proposed LLM-native head, and a deeper variant of the same design. The proposed lightweight LLM-native head performs best across benchmarks, indicating that variational parameterization benefits from being aligned with the architectural patterns of the underlying MLLM rather than relying on a separately designed generic projection head.

Interestingly, the deeper variant performs worse than the default head. We hypothesize that, in our setting, the variational head mainly serves as a lightweight readout of latent statistics from the shared MLLM hidden states, rather than as an independent high-capacity encoder. Increasing the head depth may therefore weaken feature-space alignment with the backbone, complicate prior-posterior calibration, and make the posterior more prone to fitting target-conditioned shortcut information.

\begin{table}[t]
\caption{Sensitivity analysis of loss weights. We vary the coefficients of the next-token prediction loss, forward KL alignment, and reverse KL regularization to evaluate the robustness of AMVL to objective weighting.}
\label{tab:weight_sensitivity}
\centering
\begin{tabular}{cccccc}
\toprule
$\lambda_{\mathrm{NTP}}$ & $\beta$ & $\gamma$ & V$^*$ & HRBench4K & HRBench8K \\
\midrule
1 & 1   & 1   & 84.29 & 72.12 & 68.50 \\
1 & 0.5 & 1   & 81.68 & 71.62 & 67.75 \\
1 & 1   & 0.5 & 80.63 & 72.12 & 68.12 \\
2 & 1   & 1   & 80.63 & 73.00 & 68.88 \\
2 & 0.5 & 1   & 81.68 & 72.00 & 69.00 \\
2 & 1   & 0.5 & 81.68 & 70.88 & 67.88 \\
\bottomrule
\end{tabular}
\vspace{-7pt}
\end{table}

\paragraph{Sensitivity to loss weights.}
Table~\ref{tab:weight_sensitivity} reports a sensitivity analysis over the objective weights. Overall, AMVL remains reasonably stable across a range of coefficient settings, indicating that the gains do not depend on a narrowly tuned loss balance. At the same time, changing the relative strengths of the NTP, forward KL, and reverse KL terms affects the trade-off across benchmarks, which is consistent with their different roles in latent-space learning.

In particular, the forward KL term mainly improves prior calibration for inference, while the reverse KL term controls posterior sharpness and support compatibility. Over-emphasizing either side can hurt performance: excessively strong reconstruction pressure may weaken latent regularization, whereas overly strong reverse regularization may suppress useful target-conditioned information. Overall, the results support the design of AMVL as a balanced objective that jointly improves prior calibration and posterior regularization.

\begin{table}[t]
\caption{Ablation study of the stop-gradient design in our variational alignment objectives. We evaluate the effect of removing stop-gradient from the prior-alignment term, the posterior-regularization term, or both.}
\label{tab:sg_ablation}
\centering
\begin{tabular}{lcccc}
\toprule
Method & V$^*$ & HRBench4K & HRBench8K &VisualPuzzles\\
\midrule
w/o sg in Prior Alignment & 83.25 &72.12  & 67.62&32.53 \\
w/o sg in Posterior Regularization & 80.10&72.25 & 66.75 & 32.79 \\
w/o sg in Both & 81.68 & 72.50 & 67.75 &33.05\\
Full & 84.29 & 72.12 & 68.50&33.90 \\

\bottomrule
\end{tabular}
\end{table}

\paragraph{Effect of stop-gradient design.}
As shown in Table~\ref{tab:sg_ablation}, the full model performs best overall, demonstrating the importance of the stop-gradient design in our variational alignment objectives. Removing stop-gradient from either the prior-alignment term or the posterior-regularization term leads to clear performance drops, especially on V$^*$ and HRBench8K. The degradation is most pronounced when the prior-alignment term is allowed to update the posterior, suggesting that this term works best when the posterior serves as a fixed, answer-informed teacher for the prior. Similarly, removing stop-gradient from posterior regularization weakens the intended constraint on the posterior by allowing the prior to co-adapt.

Additionally, removing stop-gradient from both terms does not cause catastrophic failure, but still underperforms the full model across most benchmarks. This indicates that the gain of our method comes not only from the presence of bidirectional KL objectives, but more importantly from the decoupled gradient flow they enforce. Overall, the results validate our design principle that prior alignment and posterior regularization should update only their intended target distributions.

\begin{table}[t]
\centering
\caption{Robustness to inference-time latent sampling under different temperatures. Higher is better.}
\label{tab:temperature_ablation}
\begin{tabular}{lccccc}
\toprule
\multirow{2}{*}{Benchmark} & \multicolumn{5}{c}{Inference Temperature ($\tau$)} \\
\cmidrule(lr){2-6}
& $\tau = 0.0$ & $\tau = 0.2$ & $\tau = 0.5$ & $\tau = 0.8$ & $\tau = 1.0$ \\
\midrule
V$^*$          & 84.29 & 83.25 & 83.25 & 82.72 & 82.20 \\
HRBench4K      & 72.12 & 71.88 & 71.75 & 71.62 & 71.00 \\
HRBench8K      & 68.50 & 67.88 & 68.00 & 67.50 & 67.62 \\
VisualPuzzles  & 33.90 & 34.50 & 33.39 & 33.56 & 32.71 \\
\bottomrule
\end{tabular}
\end{table}

\paragraph{Robustness to Inference-Time Latent Sampling.}
\label{appendix:temperature_ablation}
To further test the stability of the learned latent reasoning space at inference time, we evaluate AMVL under stochastic prior sampling with different temperatures. In the main experiments, inference uses the prior latent mean directly. Here, we instead inject Gaussian noise scaled by the prior standard deviation and a temperature parameter $\tau$:
\[
z = \mu + \tau \cdot \epsilon \odot \sigma,\qquad
\epsilon \sim \mathcal{N}(0, I), \quad \sigma = \exp\!\left(\tfrac{1}{2}\log \mathrm{var}\right).
\]
When $\tau=0$, this reduces to deterministic mean-based inference. As $\tau$ increases, the sampled latent variables become increasingly stochastic. At inference time, both $\mu$ and $\log \mathrm{var}$ are obtained from the prior branch, so the injected noise reflects uncertainty under the learned inference-time latent distribution rather than any target-aware posterior information.

We evaluate the trained model under $\tau \in \{0.0, 0.2, 0.5, 0.8, 1.0\}$ on V$^*$, HRBench4K, HRBench8K, and VisualPuzzles. The results are reported in Table~\ref{tab:temperature_ablation}.

The results show that AMVL remains reasonably stable under moderate latent perturbations. On V$^*$, performance drops from 84.29 at $\tau=0.0$ to 83.25 at $\tau=0.2$ and remains at the same level at $\tau=0.5$, indicating that mild stochasticity does not substantially harm inference. HRBench4K exhibits a similarly gradual degradation, decreasing from 72.12 to 71.00 as $\tau$ increases from 0.0 to 1.0. HRBench8K is even more stable, with only minor fluctuations across temperatures. On VisualPuzzles, moderate stochastic sampling at $\tau=0.2$ slightly improves the score (34.50 vs.\ 33.90), while higher temperatures eventually reduce performance.

Overall, these results suggest that the latent reasoning space learned by AMVL is not overly brittle to local perturbations at inference time. Moderate sampling noise only leads to small performance changes, and in some cases can even slightly improve generalization, which is consistent with the learned latent space being locally smooth rather than highly unstable. At the same time, performance gradually degrades as $\tau$ becomes large, indicating that excessive stochasticity still moves the latent state away from the most reliable inference region. Together with the latent-space analyses in Appendix~\ref{appendix:latent_spread}, this sampling ablation suggests that improved prior-posterior calibration in AMVL not only benefits deterministic inference, but also makes the learned latent reasoning space more robust to moderate stochastic perturbations.

\section{Limitations and Future Work} 
While extended ablations (Appendix~\ref{app:more_ablation}) confirm AMVL's stability and efficacy, our current empirical validation is limited to the 7B parameter scale. A critical direction for future work is scaling AMVL to larger foundational models (e.g., 70B+) to investigate whether massive parameter counts induce the spontaneous emergence of more complex, generalized latent reasoning structures, further advancing the frontier of continuous multimodal reasoning.

\section{Broader Impacts} 
Our proposed latent thinking model aims to enhance the reasoning efficiency and interpretability of large language models. Positively, this could lower computational costs for complex reasoning tasks and make model decision-making processes more transparent to users. However, we also acknowledge potential negative societal impacts. Enhanced reasoning capabilities could potentially be misused to generate more sophisticated disinformation or automate malicious activities such as phishing. 

\newpage

\end{document}